\documentclass[journal]{IEEEtran}
\usepackage{amsmath,amsfonts}
\usepackage{array}
\usepackage[caption=false,font=normalsize,labelfont=sf,textfont=sf]{subfig}
\usepackage{textcomp}
\usepackage{stfloats}
\usepackage{url}
\usepackage{verbatim}
\usepackage{graphicx}
\usepackage{cite}
\usepackage[utf8]{inputenc}
\usepackage{wrapfig, setspace, booktabs}

\usepackage{microtype}
\usepackage{amssymb}
\usepackage{pifont}
\usepackage{xcolor}
\usepackage{enumitem}
\usepackage{algorithm,algpseudocode}
\usepackage{amsthm}
\usepackage[symbol]{footmisc}

\usepackage{mathtools}

\newtheorem{proposition}{Proposition}

\theoremstyle{definition}

\newtheorem*{assumption*}{\assumptionnumber}

\providecommand{\assumptionnumber}{}


\DeclarePairedDelimiter{\abs}{\lvert}{\rvert}%
\DeclarePairedDelimiterX{\inp}[2]{\langle}{\rangle}{#1, #2}

\makeatletter
\let\oldabs\abs
\def\abs{\@ifstar{\oldabs}{\oldabs*}}
\makeatother

\usepackage[hidelinks]{hyperref}
\hypersetup{
  pdftitle={LoomSC: Scalable Deep Subspace Clustering with Projector Factorization and Exact Spectral Reduction},
  pdfauthor={Nairouz Mrabah, Youssef Melki, Mohamed Bouguessa, Riadh Ksantini, Shakeeb Murtaza, Tehseen Zia},
  pdfsubject={Scalable deep subspace clustering},
  pdfkeywords={subspace clustering, deep clustering, projector factorization, spectral clustering}
}

\ifdefined\pdfminorversion\pdfminorversion=7\fi
\begin{document}
\raggedbottom

\title{LoomSC: Scalable Deep Subspace Clustering with Projector Factorization and Exact Spectral Reduction}

\author{Nairouz Mrabah, Youssef Melki, Mohamed Bouguessa, Riadh Ksantini, Shakeeb Murtaza, and Tehseen Zia%
\thanks{N. Mrabah and S. Murtaza are with the \'{E}cole de technologie sup\'{e}rieure (\'{E}TS), Montr\'{e}al, QC, Canada. Y. Melki and R. Ksantini are with the University of Bahrain (UOB), Kingdom of Bahrain. M. Bouguessa is with the Universit\'{e} du Qu\'{e}bec \`{a} Montr\'{e}al (UQAM), Montr\'{e}al, QC, Canada. T. Zia is with COMSATS University Islamabad, Islamabad, Pakistan.}}

\markboth{LoomSC: Scalable Deep Subspace Clustering}{LoomSC: Scalable Deep Subspace Clustering}

\maketitle

\begin{abstract}
Dense self-expression matrices and full-affinity spectral clustering limit the scalability of subspace clustering. We introduce the Latent Orthogonal Optimization Model for Subspace Clustering (LoomSC), a framework that addresses both bottlenecks through projector factorization and exact spectral reduction. Motivated by the spectral structure of least-squares regression, LoomSC jointly learns latent features and a projector self-representation through two thin factors. Alternating Procrustes and least-squares updates preserve the sample factor's orthogonality while keeping the coefficient matrix implicit. We construct a nonnegative quadratic affinity that preserves the projector's support. An exact feature map then reduces its normalized spectral problem to an eigenproblem whose dimension depends only on the factor width. Neither the full affinity nor the sample Laplacian needs to be formed. Our analysis quantifies the projector approximation and identifies conditions for subspace preservation and within-subspace connectivity. For fixed dimensions and iteration budgets, the complete pipeline has linear time and memory complexity in the number of samples. Across five image-clustering benchmarks, LoomSC ranks first or second in all 15 dataset--metric comparisons against 9 state-of-the-art baselines. Its mean accuracy exceeds the highest baseline mean by 6.66 percentage points. Synthetic experiments scale to 500,000 samples while maintaining at least 99.8\% accuracy.
\end{abstract}

\section{Introduction}

In several cases, high-dimensional data can be modeled by a generation process from the union of multiple low-dimensional linear subspaces. In this context, subspace clustering is a prominent unsupervised strategy that aims to partition the data into clusters by identifying these subspaces, where each cluster is defined by exactly one low-dimensional subspace. The subspace assumption is widely used in visual analysis and has been successfully exploited in several applications, such as image segmentation \cite{paper2}, motion segmentation \cite{paper3, paper5}, and face clustering \cite{paper5, paper4}. For example, face images of a person can be represented by a low-dimensional subspace for a fixed pose and under Lambertian reflectance \cite{paper1}.

The most recent subspace clustering methods \cite{paper8, paper6, paper7} follow a two-step strategy. First, these methods learn an affinity matrix that captures the similarity between points within the same subspace. Second, a graph clustering approach, such as spectral clustering, is used to identify the clusters. The affinity matrix is constructed according to the self-expression property \cite{paper7, paper9} of the subspaces. More precisely, each data point can be expressed as a linear combination of points in the same subspace. Given a dataset $X \in \mathbb{R}^{d_{x} \times n}$ with $n$ data points of dimensionality $d_{x}$, the optimization problem associated with this strategy is:

\begin{equation}
  \begin{aligned}
        \min_{_{C}} \frac{1}{2} \left \| X - XC \right \|_{F}^{2} + \lambda \, \theta(C),
  \end{aligned}
\label{eq:1}
\end{equation}

\noindent where $C = (c_{ij}) \in \mathbb{R}^{n \times n}$ is a matrix that captures the self-expression property based on the first term; $\left \| . \right \|_{F}$ denotes the Frobenius norm, which is defined for any matrix $M= (m_{ij})$ by $\left \| M \right \|_{F} = ( \sum_{i=1} \sum_{j=1} m_{ij}^{2})^{\frac{1}{2}}$; the second term, controlled by the function $\theta$, serves as a regularization to avoid trivial solutions (e.g., $C = I$) and to adjust the structure of the self-representation matrix $C$; $\lambda \in \mathbb{R}_{+}^{*}$ is a balancing hyperparameter between the two terms. Based on the self-representation matrix, the affinity graph is constructed by computing its adjacency matrix $A_{\mathrm{abs}} = (a_{ij}^{\mathrm{abs}}) \in \mathbb{R}^{n \times n}$, such that $A_{\mathrm{abs}} = \frac{\left | C \right | + \left | C^{T} \right |}{2}$.

Different regularizers encode complementary structural preferences. Sparse Subspace Clustering (SSC)~\cite{paper7} promotes sparse coefficients through an $\ell_1$ penalty. Low-Rank Representation (LRR)~\cite{paper11} favors a low-rank coefficient matrix through nuclear-norm regularization. Multi-Subspace Representation (MSR)~\cite{paper10} combines these penalties, whereas Block Diagonal Representation (BDR)~\cite{paper8} directly encourages a block structure. Least Squares Regression (LSR)~\cite{paper9} uses squared Frobenius regularization and admits a closed-form solution. Its grouping effect encourages correlated samples to receive similar coefficients. These formulations seek to suppress cross-subspace connections while retaining informative relations within each subspace. Both requirements matter because a sparse affinity can separate samples from different subspaces yet fragment samples from the same subspace~\cite{paper12}.

The self-expression framework nevertheless introduces two distinct computational bottlenecks. Explicitly storing a dense coefficient matrix requires $\mathcal O(n^2)$ memory, even when its estimation admits an efficient solver. Constructing a full affinity retains this quadratic storage cost. The spectral stage adds a second bottleneck. A dense eigendecomposition requires $\mathcal O(n^3)$ operations, while partial eigensolvers still perform repeated products with the sample-level affinity. Consequently, accelerating coefficient estimation alone does not ensure a scalable clustering pipeline. Both self-expression and spectral assignment must avoid dense sample-by-sample computations.

Several strategies reduce these costs. Sampling methods replace the full dictionary with a smaller set of representatives~\cite{paper13,paper14,paper15,paper36}. Subspace-preserving selection seeks representatives that span the underlying subspaces~\cite{paper16}. Sparse solvers accelerate coefficient estimation through greedy pursuit~\cite{paper17}, active-set selection~\cite{paper19}, or stochastic dictionaries~\cite{paper18}. Structure-aware methods exploit compact graph models~\cite{paper20} or structured affinity estimation~\cite{paper22}. Neural coefficient predictors~\cite{paper21} provide another route by learning a mapping that replaces repeated regression.

Representative-based reduction is particularly attractive because it replaces an all-sample dictionary with $m\ll n$ representatives. Each sample is then described by $m$ coefficients instead of $n$. Anchor-graph methods can also obtain the spectral embedding from a compact factor~\cite{paper15,paper36}. Their effectiveness depends on how well the representatives capture the subspaces in the supplied feature space. Learning the coefficients improves the graph, but does not by itself adapt the image representation. This motivates a compact model in which both the representation and the directions used for self-expression are learned from the data.

The representation itself is important because complex appearance variation and data corruption can weaken the linear subspace structure of raw observations. Kernel methods address nonlinear structure through an implicit feature mapping~\cite{paper27}. Deep subspace clustering~\cite{paper28,paper32,paper34} instead couples an auto-encoder with self-expression in the latent space. The reconstruction objective preserves information about the inputs. The self-expression objective encourages latent features that conform to a subspace model. Joint optimization therefore adapts the features to clustering rather than imposing self-expression on a fixed representation.

Joint learning also makes computational compactness essential. A dense self-expression layer introduces $n^2$ sample-dependent coefficients that must be optimized together with the encoder~\cite{paper28,paper32}. Neural parameterizations and mini-batch training can reduce training costs~\cite{loom_pro_dsc}, but do not by themselves eliminate full-affinity spectral inference. Our goal is to retain both pairwise self-expression and normalized spectral clustering while learning nonlinear representations, without constructing dense sample-level matrices at either stage.

To this end, we introduce the Latent Orthogonal Optimization Model for Subspace Clustering (LoomSC). Motivated by the spectral structure of LSR, LoomSC represents self-expression through a projector $C=PP^\top$. The sample factor $P\in\mathbb R^{n\times m}$ has orthonormal columns and a prescribed width $m$. An equivalent two-factor objective separates latent directions from sample coefficients and couples both to the auto-encoder. Alternating Procrustes and least-squares updates refine these factors while preserving orthogonality. Neither the coefficient matrix nor a sample Gram matrix needs to be formed. The latent directions are updated using all samples rather than remaining fixed to selected representatives. Our analysis quantifies the projector approximation to LSR and identifies conditions under which the full row-space projector preserves subspace memberships and within-subspace connectivity.

LoomSC extends this compact representation to spectral assignment through a nonnegative quadratic affinity. Squaring the projector coefficients preserves their zero pattern and thus retains the connections encoded by self-expression. An exact feature map represents this affinity using $q=m(m+1)/2$ features per sample. Its normalized spectral problem then reduces to a $q$-dimensional eigenproblem. This reduction avoids forming either the full affinity or the sample Laplacian. For fixed dimensions and iteration budgets, the complete pipeline has linear time and memory complexity in $n$. The same compact representation therefore supports feature learning, self-expression, and global spectral assignment.

\textbf{Contributions.} \noindent\textbf{(1)} We introduce a two-factor projector model for joint feature and self-expression learning with orthogonality-preserving updates.
\noindent\textbf{(2)} We construct a support-preserving nonnegative affinity and derive an exact reduction of its normalized spectral problem without forming full pairwise matrices.
\noindent\textbf{(3)} We quantify the projector approximation to LSR and establish conditions for subspace preservation and connectivity. We also prove linear time and memory bounds for the complete pipeline under fixed dimensions and iteration budgets.
\noindent\textbf{(4)} Across five image-clustering benchmarks, LoomSC ranks first or second in all $15$ dataset--metric comparisons against nine baselines. Its mean accuracy exceeds the highest baseline mean by $6.66$ percentage points. Experiments on synthetic data reach $500{,}000$ samples while maintaining at least $99.8\%$ accuracy throughout the tested range.

\section{Related work}
\label{sec:related-work}
In line with the focus of this work, we discuss the advantages and limitations of two existing subspace clustering categories: the scalable subspace clustering category and the deep subspace clustering category. In that respect, we highlight the pertinence of introducing a subspace clustering approach that combines the advantages of both categories.

\subsection{Scalable subspace clustering:}

Least-squares regression (LSR)~\cite{paper9} provides an efficient alternative to sparse coefficient estimation. Its squared Frobenius regularization encourages a grouping effect among correlated samples and admits a closed-form solution. However, solving the regression analytically does not eliminate the dense $n\times n$ coefficient matrix, where $n$ is the number of samples. Its explicit construction and storage remain quadratic in $n$. Thus, improving the regression solver alone does not resolve the full-affinity bottleneck. LoomSC instead adopts a prescribed-width projector model and learns its thin factors directly. This changes the representation of self-expression rather than merely accelerating a dense coefficient solve.

Sparse methods reduce storage by retaining a small number of coefficients per sample. SSC-OMP~\cite{paper17} replaces $\ell_1$ optimization with orthogonal matching pursuit and establishes subspace-preserving guarantees under suitable conditions. EnSC~\cite{paper19} combines $\ell_1$ and squared $\ell_2$ regularization to balance subspace preservation and graph connectivity. Its oracle-guided active-set solver restricts optimization to candidate dictionary elements. Both approaches substantially accelerate coefficient estimation. Nevertheless, their dictionary searches still compare target residuals with the full dataset. Repeating these searches across samples retains a quadratic sample-size dependence for fixed sparsity and search budgets. Sparse spectral solvers reduce the subsequent graph-processing cost, but do not remove this coefficient-estimation cost. LoomSC avoids these separate full-dictionary searches by optimizing a shared factorization through thin matrix products and an orthogonal Procrustes update.

Stochastic sparse subspace clustering (SSSC)~\cite{paper18} addresses weak within-subspace connectivity through dictionary dropout. For normalized samples, the expected dropout objective induces an additional squared $\ell_2$ penalty, which encourages denser representations. Its scalable implementation combines coefficients from multiple reduced dictionaries through a consensus formulation. This improves connectivity while preserving a sparse affinity. However, with a fixed retained fraction, each dictionary still grows with $n$, and the reduced regression must be repeated for every target. Parallel execution accelerates these subproblems without changing their aggregate arithmetic complexity. LoomSC obtains compactness through a common learned latent factor rather than repeated stochastic dictionary reduction. The resulting sample relations are represented jointly without an ensemble of coefficient estimates.

Representative-based approaches reduce the dictionary size more directly. LMVSC~\cite{paper15} learns nonnegative sample-to-anchor coefficients and computes its spectral embedding from a compact normalized factor. SGL~\cite{paper36} further couples anchor-graph learning with a spectral constraint that targets $k$ connected components. These methods achieve linear sample-size scaling for fixed anchor and optimization budgets. Their graph weights are learned, but their reconstruction is performed using supplied features and an anchor dictionary. Consequently, they do not jointly adapt the image representation to the self-expression objective. LoomSC preserves a compact representation while learning both the encoder and the latent factor. After initialization, the columns of the latent factor are updated from all samples rather than remaining fixed anchor representatives. Unlike the nonnegative anchor weights in LMVSC and SGL, its sample-factor coefficients can be signed. This permits linear combinations beyond convex anchor assignments.

Selective sampling-based scalable sparse subspace clustering (S$^5$C)~\cite{loom_s5c} selects representatives using stochastic estimates of subgradient violations. It then solves restricted sparse regressions for all samples. Its spectral stage exploits a graph with only $O(nm)$ nonzero entries, where $m$ is the representative budget. Thus, S$^5$C reduces both regression and spectral costs without requiring a dense affinity. However, its representation remains tied to selected samples in the feature space. LoomSC replaces this exemplar dictionary with adaptive latent directions and jointly refines the features used to construct them.

A-DSSC~\cite{paper22} addresses a different source of difficulty: the compatibility between self-expression coefficients and spectral normalization. It first estimates coefficients and then obtains a doubly stochastic affinity through quadratically regularized optimal transport. A support-restricted solver exploits sparsity to accelerate this second stage. This gives a principled alternative to heuristic coefficient normalization, but retains separate coefficient-estimation and transport problems. LoomSC instead derives the affinity and its normalization from the same learned projector. For its squared affinity, each degree equals the squared norm of the corresponding sample-factor row. Thus, degree computation requires only the thin factor and no additional transport optimization.

Compact spectral computation also appears in scalable attributed-graph subspace clustering (SAGSC)~\cite{loom_sagsc}. SAGSC constructs a projector from graph-smoothed features and uses a quadratic kernel to obtain a nonnegative factorized affinity. This permits spectral clustering without explicitly assembling pairwise weights. LoomSC applies this spectral principle to jointly learned image representations rather than fixed graph-smoothed features. Its two-factor objective allows the projector to evolve with the encoder, and its analysis connects the learned factors to dominant latent directions and subspace structure. The distinction is therefore the integration of adaptive representation learning with a compact spectral construction, rather than factorization alone.

\subsection{Deep subspace clustering:}

DSCNet~\cite{paper28} integrates a differentiable self-expression layer into a convolutional auto-encoder. This couples feature learning with the reconstruction of each latent sample from the others. S$^2$ConvSCN~\cite{paper32} additionally uses spectral assignments to supervise feature learning and coefficient estimation. These formulations make the representation responsive to clustering, but retain a sample-to-sample self-expression matrix. Additional supervision does not remove its quadratic parameter storage. LoomSC retains joint feature and self-expression learning while replacing this layer with two thin factors. Its factor updates admit closed-form solutions, and their dimensions are controlled by the prescribed factor width.

Neural coefficient prediction offers another route to scalability. SENet~\cite{paper21} parameterizes pairwise coefficients with a neural network that can generalize to unseen samples. Its objective learns a coefficient-prediction rule for the supplied representation rather than jointly learning a new feature space for self-expression. This separates the number of network parameters from the number of coefficient entries. However, a compact predictor does not itself provide a compact spectral representation of the resulting affinity. LoomSC instead learns a sample factor whose algebraic structure supports both self-expression and the subsequent spectral reduction.

PRO-DSC~\cite{loom_pro_dsc} jointly learns latent representations and self-expression coefficients. A log-determinant regularizer controls representation collapse under the conditions established in its analysis. Its implementation predicts coefficients through neural mappings and Sinkhorn normalization, which supports mini-batch training. The distinction from LoomSC arises at full-set clustering. PRO-DSC constructs the coefficient matrix over the clustering set and applies spectral clustering to the resulting affinity. For $n$ samples, this inference route retains quadratic matrix storage despite efficient training. LoomSC does not reconstruct a full coefficient matrix after learning. The final sample factor directly defines a nonnegative affinity whose normalized spectral problem is solved in a feature space determined by the factor width. Thus, the computational reduction extends through spectral assignment rather than ending with network training.

Other deep methods reduce cost by changing how the final partition is obtained. The large-scale extension of pseudo-supervised deep subspace clustering~\cite{paper34} trains on a subset and assigns the remaining latent samples through nearest-neighbor classification. LoomSC instead retains an explicit model of sample relations while avoiding their dense storage. All samples participate in factor learning, and the final partition is obtained by spectral clustering of the learned affinity rather than by extending labels from a training subset. Its efficiency follows from representing that affinity compactly while preserving the self-expression formulation.

\section{LoomSC}
\label{sec:loomsc}

The proposed method develops a compact self-representation by exploiting the spectral structure of least-squares regression (LSR). We first analyze the LSR solution and motivate a projector approximation. We then express this approximation through two thin factors that are learned jointly with a deep auto-encoder. Finally, we construct a nonnegative affinity from the learned factors and obtain the spectral embedding through a reduced eigenproblem. Both phases operate directly on the factors, which makes their computational cost linear.

Let the matrix $X=[x_1,\ldots,x_n]\in\mathbb R^{d_x\times n}$ contain the input samples $x_i \in \mathbb R^{d_x}$, and let $k$ denote the number of clusters.

\subsection{A Projector Approximation Motivated by LSR}
\label{sec:loom-geometry}

LSR~\cite{paper9} models each sample through a linear combination of the data collection and penalizes the energy of the coefficients. For the input $X$, its diagonal-unconstrained formulation is
\begin{equation}
 \min_{C\in\mathbb R^{n\times n}}\
 \frac12\|X-XC\|_F^2+\lambda\|C\|_F^2,
 \qquad \lambda>0.
 \label{eq:loom-ridge}
\end{equation}
The reconstruction term encourages self-expression. The Frobenius penalty controls coefficient magnitudes while allowing correlated samples to contribute jointly. Differentiation with respect to $C$ gives the closed-form solution of this problem
\begin{equation}
 C^\star_{\mathrm{LSR}}
 =(X^\top X+2\lambda I_n)^{-1}X^\top X,
 \label{eq:loom-ridge-solution}
\end{equation}
where \(I_n\) is the \(n\times n\) identity matrix. A direct evaluation of Eq.~\eqref{eq:loom-ridge-solution} requires forming the Gram matrix \(X^\top X\), at a cost of \(O(d_xn^2)\), followed by the solution of an \(n\times n\) linear system, which requires \(O(n^3)\) operations using a dense factorization. Moreover, the resulting coefficient matrix contains \(n^2\) entries and therefore requires \(O(n^2)\) memory. Although the inversion cost can be reduced in particular dimensional regimes, explicitly constructing the dense self-representation remains quadratic in the number of samples. Thereofre, we examine the spectral structure of \(C^\star_{\mathrm{LSR}}\) to derive a compact representation.

Let \(r=\operatorname{rank}(X)\). Since \(X^\top X\) is symmetric positive semidefinite with rank \(r\), its compact eigendecomposition is
\begin{equation}
X^\top X = V_x\operatorname{diag}(\beta_1,\ldots,\beta_r)V_x^\top,
\label{eq:loom-input-gram-eigendecomposition}
\end{equation}
where \(V_x\in\mathbb R^{n\times r}\) has orthonormal columns given by the eigenvectors of \(X^\top X\) associated with its positive eigenvalues, and \(\beta_1\ge\cdots\ge\beta_r>0\) are the corresponding eigenvalues.

\begin{proposition}[Spectral structure of LSR]
\label{prop:loom-lsr-spectrum}
For $\lambda>0$, the LSR solution is symmetric and positive semidefinite, with rank $r$. Its nonzero eigenvalues are
\begin{equation}
 \tau_j=\frac{\beta_j}{\beta_j+2\lambda},
 \qquad j=\left\{1,\ldots,r\right\},
 \label{eq:loom-lsr-spectrum}
\end{equation}
and $C^\star_{\mathrm{LSR}}=V_x\operatorname{diag}(\tau_1,\ldots,\tau_{r})V_x^\top$.
\end{proposition}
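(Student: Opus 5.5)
The plan is to diagonalize $X^\top X$ with a full orthonormal basis and read off the LSR solution in that basis. Complete $V_x$ to an orthogonal matrix $V=[V_x,\,V_\perp]\in\mathbb R^{n\times n}$, where the columns of $V_\perp$ span $\ker(X^\top X)$. Then Eq.~\eqref{eq:loom-input-gram-eigendecomposition} becomes
\begin{equation*}
X^\top X=V\operatorname{diag}(\beta_1,\ldots,\beta_r,0,\ldots,0)V^\top .
\end{equation*}
Since $\lambda>0$, the matrix $X^\top X+2\lambda I_n$ equals $V\operatorname{diag}(\beta_1+2\lambda,\ldots,\beta_r+2\lambda,2\lambda,\ldots,2\lambda)V^\top$. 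All its eigenvalues are strictly positive, so it is invertible, which justifies Eq.~\eqref{eq:loom-ridge-solution}. Its inverse is $V\operatorname{diag}\bigl((\beta_j+2\lambda)^{-1},(2\lambda)^{-1}\bigr)V^\top$.

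Next I will substitute both factorizations into Eq.~\eqref{eq:loom-ridge-solution}. Using $V^\top V=I_n$, the product collapses to
\begin{equation*}
C^\star_{\mathrm{LSR}}=V\operatorname{diag}\Bigl(\tfrac{\beta_1}{\beta_1+2\lambda},\ldots,\tfrac{\beta_r}{\beta_r+2\lambda},0,\ldots,0\Bigr)V^\top .
\end{equation*}
The trailing zero block corresponds to $V_\perp$. Dropping it gives the compact form $V_x\operatorname{diag}(\tau_1,\ldots,\tau_r)V_x^\top$ with $\tau_j$ as in Eq.~\eqref{eq:loom-lsr-spectrum}.

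The remaining properties follow from this form. It is a congruence of a diagonal matrix, so $C^\star_{\mathrm{LSR}}$ is symmetric. Each $\tau_j=\beta_j/(\beta_j+2\lambda)$ lies in $(0,1)$ because $\beta_j>0$ and $\lambda>0$, so the matrix is positive semidefinite. Its nonzero eigenvalues are exactly $\tau_1,\ldots,\tau_r$, with eigenvectors the columns of $V_x$, and so its rank is $r$.

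There is no real obstacle. The only point needing care is that the inverse and $X^\top X$ commute because they share the eigenbasis $V$. This is what lets the nonsymmetric-looking product $(X^\top X+2\lambda I_n)^{-1}X^\top X$ be symmetric. The full orthogonal completion $V$ makes this commuting explicit and avoids any issue with the kernel directions.
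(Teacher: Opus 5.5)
Your proof is correct and follows essentially the same route as the paper. Both diagonalize in the eigenbasis of $X^\top X$: the paper solves the normal equation $(X^\top X+2\lambda I_n)C=X^\top X$ direction by direction, obtaining $(\beta_j+2\lambda)\tau_j=\beta_j$ and a zero eigenvalue on the orthogonal complement, while you complete $V_x$ to an orthogonal $V$ and invert explicitly. Your explicit checks of symmetry, $\tau_j\in(0,1)$, and rank $r$ fill in steps the paper leaves implicit.
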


The eigenvectors identify the sample relationships represented by LSR. Eq.~\eqref{eq:loom-lsr-spectrum} also shows that directions with $\beta_j$ large relative to $2\lambda$ have eigenvalues close to one. This observation motivates replacing their spectral weights by one and retaining the corresponding subspace. For the full row space, define $C_{\mathrm{row}}=V_xV_x^\top\in\mathbb R^{n\times n}$. The approximation error is
\begin{equation}
 \|C^\star_{\mathrm{LSR}}-C_{\mathrm{row}}\|_F^2
 =\sum_{j=1}^{r}
 \left(\frac{2\lambda}{\beta_j+2\lambda}\right)^2.
 \label{eq:loom-ridge-gap}
\end{equation}
Thus, the projector approaches the LSR representation as the regularization becomes small relative to the positive data spectrum. This gives a direct spectral justification for an idempotent self-representation.

To obtain a compact model, choose a factor width $m$ with $k\le m<n$. When $m\le r$, let $V_{x,m}\in\mathbb R^{n\times m}$ contain the first $m$ columns of $V_x$. The corresponding approximation satisfies

\begin{equation}
\|C^\star_{\mathrm{LSR}}-V_{x,m}V_{x,m}^\top\|_F^2=\sum_{j=1}^{m}(1-\tau_j)^2 +\sum_{j=m+1}^{r}\tau_j^2.
\label{eq:loom-truncated-approximation}
\end{equation}

For the retained directions \(j\le m\), the approximation replaces the LSR eigenvalue \(\tau_j\) by the projector eigenvalue \(1\). The first term in Eq.~\eqref{eq:loom-truncated-approximation} measures the resulting error. The remaining directions \(j>m\) are discarded, and the second term measures the corresponding loss of the LSR spectral components. Since $\tau_j$ is monotonically increasing with \(\beta_j\), the ordering of the \(\tau_j\)'s is identical to that of the positive eigenvalues of \(X^\top X\). Hence, for a fixed width \(m\), retaining the first \(m\) eigenvectors gives the closest projector among the rank-\(m\) projectors formed from the eigenvectors of \(X^\top X\).

Setting \(m=r\) recovers the full row-space projector \(C_{\mathrm{row}}=V_xV_x^\top\). However, constructing this projector requires determining \(r\) and computing all \(r\) nonzero right singular vectors of \(X\). For a dense \(X\in\mathbb R^{d_x\times n}\), computing the full singular subspace costs \(O(d_xn\min\{d_x,n\})\), while storing \(V_x\) requires \(O(nr)\) memory. These costs become prohibitive when \(r\) grows with the data size. Thus, LoomSC does not attempt to recover the complete rank-\(r\) projector. Instead, \(m\) is treated as a prescribed model width, and the corresponding thin factors are learned directly in the subsequent formulation. %

Accordingly, we parameterize the self-representation as
\begin{equation}
 C=PP^\top,\qquad
 P\in\mathbb R^{n\times m},\qquad P^\top P=I_m.
 \label{eq:loom-projector}
\end{equation}
The factor $P$ stores the relationships between all samples through $nm$ coefficients. Its orthogonality makes $C$ symmetric, positive semidefinite, and idempotent, with rank $m$. It also fixes the coefficient energy:
\begin{equation}
 \|PP^\top\|_F^2
 =\operatorname{tr}(P^\top P)=m.
 \label{eq:loom-constant}
\end{equation}

Substituting Eq.~\eqref{eq:loom-projector} into the LSR objective in Eq.~\eqref{eq:loom-ridge} yields the following optimization over fixed-width projectors:
\begin{equation}
  \min_{P^\top P=I_m}\ \|X-XPP^\top\|_F^2.
  \label{eq:loom-projector-objective}
\end{equation}
Since $PP^\top$ is an orthogonal projector, Eq.~\eqref{eq:loom-projector-objective} is equivalent to maximizing $\operatorname{tr}(P^\top X^\top XP)$. Hence, for $m\le r$, an optimal solution is $P^\star=V_{x,m}R$, %
where $R\in\mathbb R^{m\times m}$ is any orthogonal matrix. All such solutions span the same dominant $m$-dimensional subspace and yield
\begin{equation}
P^\star P^{\star\top}=V_{x,m}V_{x,m}^\top.
\label{eq:loom-projector-solution}
\end{equation}

The minimum reconstruction error is $\sum_{j=m+1}^{r}\beta_j$. Thus, the exact solution requires computing the dominant right singular subspace of the current representation. This is practical when the representation is fixed, but it becomes less attractive when the features are learned jointly, since the dominant subspace must be recomputed as the representation evolves. To address this problem, LoomSC replaces the repeated spectral solution with a two-factor formulation. The resulting updates involve only thin matrix products and an $n\times m$ Procrustes problem, while the full coefficient matrix $C$ remains implicit.

\subsection{Learning a Latent Representation}
\label{sec:loom-model}

We apply the compact projector model to a learned latent space. This lets the self-expression structure guide feature extraction. We use an encoder $\mathcal E$ and a decoder $\mathcal D$ with parameter collections $W$ and $\widehat W$, respectively. They define
\begin{equation}
 \begin{aligned}
 Z&=\mathcal E(X;W)\in\mathbb R^{d_z\times n},\\
 \widehat X&=\mathcal D(Z;\widehat W)\in\mathbb R^{d_x\times n},
 \end{aligned}
 \label{eq:loom-encoding}
\end{equation}
where $d_z$ is the latent dimension and $z_i\in\mathbb R^{d_z}$ is the representation of $x_i$. The vector $\widehat x_i\in\mathbb R^{d_x}$ is the $i^{\text{th}}$ column  of the matrix $\widehat X$. For image inputs, convolutional layers extract spatial features, which are flattened to form the columns of the latent representation matrix $Z$. The decoder reconstructs the input from these features. Its reconstruction objective is
\begin{equation}
 J_{\mathrm{rec}}(W,\widehat W)
 =\frac1n\|X-\widehat X\|_F^2.
 \label{eq:loom-pretrain}
\end{equation}

This objective trains the representation to retain information about the observed samples. The projector objective in Eq.~\eqref{eq:loom-projector-objective}, applied to $Z$, imposes a complementary subspace structure.

\subsection{Self-Expression Through Two Thin Factors}
\label{sec:loom-factorization}

To optimize projector self-expression in the latent space, we introduce a latent factor $L\in\mathbb R^{d_z\times m}$. Its columns represent learned directions in the latent space. The following decomposition connects the resulting two-factor model to projector self-expression:
\begin{equation}
 \|Z-LP^\top\|_F^2
 =\|Z-ZPP^\top\|_F^2+\|L-ZP\|_F^2.
 \label{eq:loom-split}
\end{equation}

The identity in Eq.~\eqref{eq:loom-split} holds for every $P$ satisfying $P^\top P=I_m$. The proof is provided in Appendix~B. Minimizing its left-hand side with respect to $L$ gives $L=ZP$. Hence, we obtain:
\begin{equation}
 \min_{L,\,P^\top P=I_m}\|Z-LP^\top\|_F^2
 =\min_{P^\top P=I_m}\|Z-ZPP^\top\|_F^2.
 \label{eq:loom-factor-objective}
\end{equation}

The two-factor formulation separates the latent directions represented by $L$ from the sample coefficients encoded by $P$. This separation gives simple block updates and requires only products involving thin matrices.

For fixed $Z$, let $r_z=\operatorname{rank}(Z)$ and write $Z=U_z\Sigma_zV_z^\top$. The matrices $U_z\in\mathbb R^{d_z\times r_z}$ and $V_z\in\mathbb R^{n\times r_z}$ have orthonormal columns. The diagonal matrix $\Sigma_z\in\mathbb R^{r_z\times r_z}$ contains the positive singular values $\sigma_1\ge\cdots\ge\sigma_{r_z}>0$.

\begin{proposition}[Reconstruction through dominant directions]
\label{prop:loom-factor}
For a fixed latent matrix $Z$, the optimal factor residual is
\begin{equation}
 \min_{L,\,P^\top P=I_m}\|Z-LP^\top\|_F^2
 =\sum_{j=m+1}^{r_z}\sigma_j^2,
 \label{eq:loom-tail}
\end{equation}
where an empty sum is zero. For $m\le r_z$, a minimizing $P$ spans an $m$-dimensional dominant right singular subspace of $Z$. For $m\ge r_z$, an optimal projector contains the row space of $Z$ and satisfies $ZPP^\top=Z$. In particular, $m=r_z$ gives $C=V_zV_z^\top$.
\end{proposition}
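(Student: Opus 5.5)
The proof reduces the two-factor problem to the projector problem and then reads everything off the SVD of $Z$. By the identity in Eq.~\eqref{eq:loom-split}, which holds for every $P$ with $P^\top P=I_m$, we have
\[
\min_{L,\,P^\top P=I_m}\|Z-LP^\top\|_F^2=\min_{P^\top P=I_m}\|Z-ZPP^\top\|_F^2,
\]
as recorded in Eq.~\eqref{eq:loom-factor-objective}. The inner minimum over $L$ is attained at $L=ZP$. Since $PP^\top$ is an orthogonal projector, Pythagoras gives
\[
\|Z-ZPP^\top\|_F^2=\|Z\|_F^2-\operatorname{tr}(P^\top Z^\top ZP).
\]
Hence the task is to maximize $\operatorname{tr}(P^\top Z^\top ZP)$ over $n\times m$ matrices with orthonormal columns. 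We have $Z^\top Z=V_z\Sigma_z^2V_z^\top$, whose nonzero eigenvalues are $\sigma_1^2\ge\cdots\ge\sigma_{r_z}^2$, and whose remaining $n-r_z$ eigenvalues are zero.

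The key step is the Ky Fan maximum principle: for symmetric $A$, $\max_{P^\top P=I_m}\operatorname{tr}(P^\top AP)$ equals the sum of the $m$ largest eigenvalues of $A$. The maximum is attained exactly when the columns of $P$ span an invariant subspace belonging to those eigenvalues. If a self-contained argument is preferred, I would argue as follows.
\begin{itemize}
\item Set $p_i=\|V_z^\top$-row weights$\|$, i.e.\ write $\operatorname{tr}(P^\top Z^\top ZP)=\sum_j\sigma_j^2 w_j$ with $w_j=\|P^\top v_j\|^2$, where $v_j$ is the $j$th column of $V_z$.
\item Each $w_j$ satisfies $0\le w_j\le 1$, because $PP^\top$ is a projector.
\item Summing over $j$ gives $\sum_j w_j\le\operatorname{tr}(P^\top P)=m$.
\item A linear objective with nonincreasing weights $\sigma_j^2$, under these box and sum constraints, is maximized by placing weight $1$ on the first $\min(m,r_z)$ indices.
\end{itemize}
This bound is the step I expect to need the most care, mainly in the equality cases. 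It yields
\[
\|Z-ZPP^\top\|_F^2\ge\sum_{j=1}^{r_z}\sigma_j^2-\sum_{j=1}^{\min(m,r_z)}\sigma_j^2=\sum_{j=m+1}^{r_z}\sigma_j^2,
\]
where the sum is empty when $m\ge r_z$.

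For attainment, the two cases are handled separately.
\begin{itemize}
\item \textbf{Case $m\le r_z$.} Take $P=V_{z,m}R$ with $R$ orthogonal, where $V_{z,m}$ holds the first $m$ columns of $V_z$. This achieves the bound and spans a dominant $m$-dimensional right singular subspace, mirroring Eq.~\eqref{eq:loom-projector-solution}. The qualifier ``a'' dominant subspace covers ties $\sigma_m=\sigma_{m+1}$. In that case the equality analysis ($w_j=1$ for $\sigma_j^2>\sigma_m^2$, with the weight spread among tied indices) shows that every minimizer spans some dominant subspace.
\item \textbf{Case $m\ge r_z$.} The condition $m<n$ leaves room to complete $V_z$ to $P=[V_z,\,Q]$, where $Q$ has orthonormal columns orthogonal to $V_z$. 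Then $PP^\top V_z=V_z$, so
\[
ZPP^\top=U_z\Sigma_zV_z^\top PP^\top=Z,
\]
and the residual is zero. Conversely, zero residual forces $Z(I-PP^\top)=0$, i.e.\ the row space $\operatorname{range}(V_z)$ is contained in $\operatorname{range}(P)$. Finally, $m=r_z$ forces $\operatorname{range}(P)=\operatorname{range}(V_z)$, and hence $C=PP^\top=V_zV_z^\top$.
\end{itemize}
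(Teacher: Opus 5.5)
Your proposal is correct and follows the paper's own route: eliminate $L$ via the split identity, rewrite the residual as $\|Z\|_F^2-\operatorname{tr}(P^\top Z^\top ZP)$ and apply the Ky Fan (variational) principle to $Z^\top Z$, then for $m\ge r_z$ use $ZPP^\top=Z\Rightarrow\operatorname{range}(Z^\top)\subseteq\operatorname{range}(P)$ with a dimension count at $m=r_z$. Your weight argument with $w_j=\|P^\top v_j\|_2^2$ only adds a self-contained proof of the trace bound that the paper cites, and your tie analysis correctly refines the paper's remark on nonuniqueness; the garbled first bullet should simply define $w_j$.
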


Proposition~\ref{prop:loom-factor} connects the learned factors to the dominant spectral directions of $Z$. Thus, the spectral approximation developed above applies directly to the learned representation. It also quantifies how the width controls the amount of latent information represented by the model. The proofs of~\eqref{eq:loom-split} and Proposition~\ref{prop:loom-factor} are given in Appendix~\ref{app:loom-factor}.

We couple the factor objective with the auto-encoder to learn the latent features and their self-representation jointly:
\begin{equation}
 \begin{aligned}
 \min_{W,\widehat W,L,P}\quad
 J(W,\widehat W,L,P)
 &=J_{\mathrm{rec}}(W,\widehat W)+\|Z-LP^\top\|_F^2,\\
 \text{subject to}\quad P^\top P&=I_m.
 \end{aligned}
 \label{eq:loom-joint}
\end{equation}

The reconstruction term retains information about the input samples. The factor term encourages the latent codes to admit a common compact self-representation. Joint optimization allows the encoder to adjust this representation as the factors are refined. For sample $i$, the row coefficients $p_i=P_{i:}^\top\in\mathbb R^m$ reconstruct its latent vector as $Lp_i$.

\subsection{Alternating Optimization}
\label{sec:loom-optimization}

We first pretrain the auto-encoder with~\eqref{eq:loom-pretrain}. We then initialize the columns of $L$ using $m$ latent samples selected by $k$-means++ seeding~\cite{loom_kmeanspp}. This spreads the initial directions across the observed latent features. Appendix~E specifies the sampling rule. The factor $P$ is initialized by its update below. Each joint iteration updates the network parameters, refreshes $Z$, and then updates $P$ and $L$ in sequence.

\paragraph{Network parameters}
With $L$ and $P$ fixed, an Adam step~\cite{loom_adam} updates the encoder and decoder:
\begin{equation}
 (W,\widehat W)\leftarrow
 \operatorname{Adam}_{t}\!\left(
 (W,\widehat W),\nabla_{W,\widehat W}J\right),
 \label{eq:loom-network}
\end{equation}
where $t$ is the joint iteration index. The factor term contributes the latent gradient $2(Z-LP^\top)$, which encourages the encoder to produce representations consistent with the current factors. To control activation memory, the samples can be processed in chunks $\{\mathcal B_b\}$ while keeping $L$ and $P$ fixed:
\begin{equation}
 J=\sum_b\sum_{i\in\mathcal B_b}
 \left(
 \frac1n\|x_i-\widehat x_i\|_2^2
 +\|z_i-Lp_i\|_2^2
 \right).
 \label{eq:loom-batches}
\end{equation}
The gradients are accumulated over all chunks before a single update of $(W,\widehat W)$. Hence, chunking reduces memory usage without changing the full-data objective. After the update, the encoder recomputes $Z$ before updating $P$ and $L$.
\paragraph{Sample factor}
For fixed $Z$ and $L$, orthogonality gives
\begin{equation}
 \|Z-LP^\top\|_F^2
 =\|Z\|_F^2+\|L\|_F^2
 -2\operatorname{tr}(P^\top Z^\top L).
 \label{eq:loom-procrustes-objective}
\end{equation}
Thus, updating $P$ amounts to maximizing the trace term under $P^\top P=I_m$. This is an orthogonal Procrustes problem. To solve it, we form the thin product $Z^\top L\in\mathbb R^{n\times m}$ and compute
\begin{equation}
 Z^\top L=U_P\Sigma_PV_P^\top,
 \qquad P\leftarrow U_PV_P^\top,
 \label{eq:loom-p-update}
\end{equation}
where $U_P\in\mathbb R^{n\times m}$ and $V_P\in\mathbb R^{m\times m}$ have orthonormal columns. The diagonal matrix $\Sigma_P\in\mathbb R^{m\times m}$ contains the singular values. This update attains the maximum trace and preserves the orthogonality constraint.

\paragraph{Latent factor}
For fixed $Z$ and $P$, differentiation gives $\nabla_L\|Z-LP^\top\|_F^2=2(L-ZP)$. The minimizing update is
\begin{equation}
 L\leftarrow ZP.
 \label{eq:loom-l-update}
\end{equation}
Each column of $L$ is refined using the current latent samples and their coefficients. Together, the two updates adapt the representation to the full dataset using an $n\times m$ decomposition and thin matrix products.

\begin{proposition}[Descent of the factor updates]
\label{prop:loom-descent}
For fixed $Z$, each update in~\eqref{eq:loom-p-update} and~\eqref{eq:loom-l-update} minimizes the factor objective with respect to its block. Repeating these updates yields a nonincreasing sequence of objective values that converges to a finite limit.
\end{proposition}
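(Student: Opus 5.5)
We prove the two block-optimality claims separately and then combine them into a monotone-convergence argument. Throughout, $Z$ is fixed and $F(L,P)=\|Z-LP^\top\|_F^2$ with feasible set $\{P:P^\top P=I_m\}$.

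\textbf{The $P$-block.} For fixed $L$, Eq.~\eqref{eq:loom-procrustes-objective} shows that minimizing $F$ over feasible $P$ is equivalent to maximizing $\operatorname{tr}(P^\top M)$ with $M=Z^\top L=U_P\Sigma_PV_P^\top$. For any feasible $P$, set $Q=U_P^\top P V_P\in\mathbb R^{m\times m}$. Then
\[
\operatorname{tr}(P^\top M)=\operatorname{tr}(\Sigma_P V_P^\top P^\top U_P)=\sum_{j}(\Sigma_P)_{jj}Q_{jj}.
\]
Each diagonal entry satisfies $|Q_{jj}|\le 1$, because $\|Q\|_2\le\|U_P\|_2\|P\|_2\|V_P\|_2\le 1$. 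Since the singular values $(\Sigma_P)_{jj}$ are nonnegative, the trace is at most $\operatorname{tr}(\Sigma_P)$. This bound is attained by $P=U_PV_P^\top$, which gives $Q=I_m$. The candidate is also feasible, since $P^\top P=V_PU_P^\top U_PV_P^\top=I_m$. When $M$ is rank-deficient the thin SVD factors are not unique, but every choice attains the same maximal value, so the update is still a block minimizer.

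\textbf{The $L$-block.} For fixed feasible $P$, the map $L\mapsto F(L,P)$ is a convex quadratic. Its gradient is $2(LP^\top P-ZP)=2(L-ZP)$, where we use $P^\top P=I_m$. This gradient vanishes exactly at $L=ZP$. Alternatively, we can invoke the split identity~\eqref{eq:loom-split}: there $L$ appears only in the term $\|L-ZP\|_F^2$, which is zero at $L=ZP$. Either way, $L=ZP$ is the unique global minimizer.

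\textbf{Monotonicity and convergence.} Write the iterates as $(L_t,P_t)$, with the $P$ update performed first, matching the paper's ordering. Block optimality then gives the chain
\[
F(L_t,P_{t+1})\le F(L_t,P_t),\qquad F(L_{t+1},P_{t+1})\le F(L_t,P_{t+1}).
\]
Hence the objective values form a nonincreasing sequence. The sequence is bounded below by $0$, since a Frobenius norm is nonnegative; it is in fact bounded below by the tail $\sum_{j>m}\sigma_j^2$ from Proposition~\ref{prop:loom-factor}. By the monotone convergence theorem for real sequences, it converges to a finite limit.

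The only step requiring real care is the Procrustes optimality bound, in particular justifying $|Q_{jj}|\le 1$ for the non-square orthonormal factors and handling the non-unique SVD. The remaining steps are routine.
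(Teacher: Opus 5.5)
Your proposal is correct and follows essentially the same route as the paper. The three steps match: global optimality of the Procrustes step via the bound $\operatorname{tr}(P^\top Z^\top L)\le\operatorname{tr}(\Sigma_P)$, attained at $P=U_PV_P^\top$; global optimality of $L=ZP$ by strict convexity, which the paper reads off the split identity; and convergence of a nonnegative, nonincreasing sequence. Your argument with $Q=U_P^\top P V_P$ and $|Q_{jj}|\le\|Q\|_2\le 1$ simply makes explicit the trace inequality that the paper cites without proof.
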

Appendix~\ref{app:loom-factor} proves this result. Algorithm~\ref{alg:loom-main} combines the factor updates with representation learning for a prescribed number of joint iterations.

\begin{algorithm}[!t]
\caption{LoomSC}
\label{alg:loom-main}
\begin{algorithmic}[1]
\Require $X\in\mathbb R^{d_x\times n}$, $k\leq m<n$, $T_{\mathrm{pre}},T$.
\Ensure Cluster labels $\gamma_1,\ldots,\gamma_n$.
\State Pretrain $(W,\widehat W)$ for $T_{\mathrm{pre}}$ updates using \eqref{eq:loom-pretrain}.
\State Compute $Z$ using \eqref{eq:loom-encoding}.
\State Initialize $L$ with $m$ sample columns selected by applying $k$-means++ to $Z$.
\State Initialize $P$ using \eqref{eq:loom-p-update}.
\For{$t=1,\ldots,T$}
  \State Fix $L$ and $P$ and accumulate the gradient using \eqref{eq:loom-batches}.
  \State Update $(W,\widehat W)$ using \eqref{eq:loom-network}.
  \State Recompute $Z$ using \eqref{eq:loom-encoding}.
  \State Update $P$ using the thin-SVD step in \eqref{eq:loom-p-update}.
  \State Update $L\leftarrow ZP$ using \eqref{eq:loom-l-update}.
\EndFor
\State Obtain the cluster labels from Algorithm~\ref{alg:loom-spectral} using the final $P$ and $Z$.
\State \Return $\gamma_1,\ldots,\gamma_n$.
\end{algorithmic}
\end{algorithm}

\subsection{Subspace Structure of the Learned Projector}
\label{sec:loom-subspaces}

The projector interpretation also explains how self-expression can preserve subspace memberships. Consider latent samples spanning subspaces $S_1,\ldots,S_k\subseteq\mathbb R^{d_z}$ with dimensions $r_1,\ldots,r_k$. For independent subspaces, their sum is direct and
\begin{equation}
 \dim(S_1+\cdots+S_k)=\sum_{a=1}^k r_a=r_z.
 \label{eq:loom-direct-sum}
\end{equation}

\begin{proposition}[Subspace preservation and connectivity]
\label{prop:loom-recovery}
Suppose all latent samples are nonzero and span the direct sum in~\eqref{eq:loom-direct-sum}. If $m=r_z<n$ and the fixed-$Z$ factor objective attains its minimum, then $C=PP^\top$ is block diagonal after grouping samples by subspace. Within a subspace, its nonzero off-diagonal coefficients define a connected graph whenever that subspace's samples admit no partition into two nonempty sets whose spans form a direct sum.
\end{proposition}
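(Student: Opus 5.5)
By Proposition~\ref{prop:loom-factor} with $m=r_z$, any minimizer gives $C=PP^\top=V_zV_z^\top$. This is the orthogonal projector onto the row space of $Z$, that is, onto $\operatorname{range}(Z^\top)\subseteq\mathbb R^n$. Its complement is $\ker Z$, the space of coefficient vectors $c$ with $Zc=0$. So I will argue entirely about the row-space projector and never about the particular factors.

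\textbf{Block-diagonal structure.} Order the samples by subspace and write $Z=[Z_1,\ldots,Z_k]$ with $Z_a\in\mathbb R^{d_z\times n_a}$ and $\operatorname{range}(Z_a)=S_a$.
\begin{enumerate}[label=(\roman*)]
\item Since the sum is direct, $Zc=\sum_a Z_ac_a=0$ forces $Z_ac_a=0$ for every block $a$. Hence $\ker Z=\bigoplus_a\ker Z_a$, where each summand is embedded in its own coordinate block.
\item Taking orthogonal complements gives $\operatorname{range}(Z^\top)=\bigoplus_a\operatorname{range}(Z_a^\top)$, again embedded blockwise.
\item These summands lie in mutually orthogonal coordinate blocks, so the projector onto their sum is the sum of the blockwise projectors. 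Therefore $C=\operatorname{diag}(C_1,\ldots,C_k)$ with $C_a=V_aV_a^\top$, the projector onto the row space of $Z_a$.
\end{enumerate}
The ranks are consistent: $\sum_a r_a=r_z$ by \eqref{eq:loom-direct-sum}.

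\textbf{Connectivity.} Fix one subspace and drop the index $a$. Let $G$ be the graph on its samples with an edge $i\sim j$ whenever $C_{ij}\neq 0$ and $i\ne j$. I will argue by contrapositive.
\begin{enumerate}[label=(\roman*)]
\item Suppose $G$ is disconnected. Then there is a partition $I\cup J$ of the samples into nonempty sets with $C_{IJ}=0$, so $C$ is block diagonal in $(I,J)$ and $C=\operatorname{diag}(C_{II},C_{JJ})$.
\item Because $C$ is a symmetric idempotent, each diagonal block $C_{II}$ and $C_{JJ}$ is itself an orthogonal projector. Moreover, $\operatorname{range}(C)=\operatorname{range}(C_{II})\oplus\operatorname{range}(C_{JJ})$, embedded coordinatewise.
\item Now use $ZC=Z$. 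Blockwise this reads $Z_IC_{II}=Z_I$ and $Z_JC_{JJ}=Z_J$.
\item Suppose $\operatorname{span}(Z_I)\cap\operatorname{span}(Z_J)\ni v\ne0$, with $v=Z_Ic_I=-Z_Jc_J$. Then $c=(c_I;c_J)\in\ker Z$, so $Cc=0$. This gives $C_{II}c_I=0$, hence $Z_Ic_I=Z_IC_{II}c_I=0$, contradicting $v\neq0$.
\item Therefore the spans of $Z_I$ and $Z_J$ form a direct sum. This contradicts the hypothesis, so $G$ is connected.
\end{enumerate}

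\textbf{Role of the nonzero-sample assumption.} Nonzero samples guarantee $C_{ii}>0$, since $C_{ii}=\|Ce_i\|^2$ and $Ce_i=0$ would mean $e_i\in\ker Z$, i.e., $z_i=0$. This matters only to exclude isolated trivial rows. The zero sample is exactly the degenerate case in which a one-point partition gives a direct sum, so it is also excluded by the hypothesis.

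\textbf{Main obstacle.} I expect the connectivity step to be the delicate one. It requires turning vanishing off-diagonal entries of the projector into a splitting of $\ker Z$, and the proof hinges on two facts: the diagonal blocks of a block-diagonal projector are projectors, and $\ker Z$ is exactly the null space of $C$. The block-diagonal part is a routine direct-sum argument.
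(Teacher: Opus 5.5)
Your proof is correct. It shares the paper's skeleton: reduce to the row-space projector $C=Z^\dagger Z$ at a global minimizer with $m=r_z$, show that it splits into the blocks $Z_a^\dagger Z_a$, and prove connectivity by contradiction from a disconnecting partition. Both steps are carried out differently, though.

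\textbf{Block-diagonality.} The paper factors $Z=\Omega\Xi$, where $\Omega=[\Omega_1,\ldots,\Omega_k]$ has full column rank (this is where independence enters) and $\Xi$ is block diagonal. It concludes $\operatorname{row}(Z)=\operatorname{row}(\Xi)$ and hence $Z^\dagger Z=\Xi^\dagger\Xi$. You argue dually through $\ker Z=\bigoplus_a\ker Z_a$ and orthogonal complements, which avoids pseudoinverses altogether.

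\textbf{Connectivity.} The paper counts dimensions. If $C_a$ is block diagonal in $(\mathcal A,\mathcal B)$, the coordinate projections of $\operatorname{row}(Z_a)$ onto $\mathcal A$ and $\mathcal B$ have dimensions $\operatorname{rank}(Z_{\mathcal A})$ and $\operatorname{rank}(Z_{\mathcal B})$. These coincide with the ranks of the diagonal blocks, so the rank sum equals $r_a$, which is the direct-sum condition in rank form. You instead take a putative nonzero $v$ in the intersection of the spans and lift it to a kernel vector. You then annihilate it using $\ker C_a=\ker Z_a$ together with $Z_IC_{II}=Z_I$.

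\textbf{Comparison.} Your route is more explicit and shows exactly which projector identities are used; your step (ii) is in fact never used. The paper's rank formulation has the advantage that the converse, necessity of the condition, follows by reading the equality backwards. The statement does not require this, but the appendix records it.

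\textbf{A small slip.} In your side remark, a one-point partition $\{i\}$ yields a direct sum not only when $z_i=0$ but whenever $z_i$ lies outside the span of the remaining samples. Zero samples are therefore not ``exactly'' that case. This does not affect the proof. The nonzero hypothesis is really used for $C_{ii}>0$ and for unambiguous subspace membership, since $S_a\cap S_b=\{0\}$.
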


The result identifies conditions under which the projector both separates the generating subspaces and connects the samples within each subspace. Its proof uses the block structure of the row-space projector and appears in Appendix~\ref{app:loom-recovery}. The next phase converts these coefficients into nonnegative graph weights while preserving their nonzero connections.

\subsection{Scalable Spectral Clustering from the Learned Factors}
\label{subsec:loom-spectral}

The learned factor $P$ provides the pairwise coefficients through $C=PP^\top$. We now use this structure to obtain the final partition directly from compact matrices. The construction has two steps. First, we convert the coefficients into nonnegative affinities while preserving their support. We then express the normalized spectral problem through a quadratic feature map whose dimension depends only on $m$.

\paragraph{A nonnegative affinity with an explicit feature map}
The magnitude of $C_{ij}=p_i^\top p_j$ measures the strength of the relation between samples $i$ and $j$. Squaring this coefficient gives the symmetric affinity
\begin{equation}
 A=C\circ C\in\mathbb R^{n\times n},\qquad
 A_{ij}=(p_i^\top p_j)^2,
 \label{eq:loom-affinity}
\end{equation}
where $\circ$ denotes entrywise multiplication. This construction assigns nonnegative weights to both positive and negative coefficients. It also preserves their zero pattern. Thus, any separation between subspaces encoded by $C$ is retained in $A$. Moreover, the affinity is invariant to an orthogonal change of basis in $P$.

The quadratic form in~\eqref{eq:loom-affinity} admits an explicit feature representation. Let $q=m(m+1)/2$. For a symmetric matrix $S\in\mathbb R^{m\times m}$, $\operatorname{svec}(S)\in\mathbb R^q$ contains its $m$ diagonal entries followed by its upper-triangular entries. Each off-diagonal entry is multiplied by $\sqrt{2}$. This scaling preserves the Frobenius inner product between symmetric matrices. We define
\begin{equation}
 \begin{aligned}
 \phi(p)&=\operatorname{svec}(pp^\top)\in\mathbb R^q,\\
 F_{i:}&=\phi(p_i)^\top,\qquad F\in\mathbb R^{n\times q}.
 \end{aligned}
 \label{eq:loom-lift}
\end{equation}
Consequently, $\phi(p_i)^\top\phi(p_j)=(p_i^\top p_j)^2$, which gives
\begin{equation}
 A=FF^\top.
 \label{eq:loom-affinity-factor}
\end{equation}
Hence, the $n\times n$ affinity matrix is represented exactly through $q$ features per sample and does not need to be formed explicitly.

We retain the diagonal entries of $A$ as self-loop weights. The projector identity $C^2=C$ then gives a particularly simple degree formula,
\begin{equation}
 \delta_i=\sum_{j=1}^n A_{ij}
 =(C^2)_{ii}=C_{ii}=\|p_i\|_2^2,
 \qquad \sum_{i=1}^n\delta_i=m.
 \label{eq:loom-degree}
\end{equation}
Each degree therefore requires only a row norm of $P$. Computing all degrees takes $O(nm)$ time. Let $\mathcal I_+=\{i:\delta_i>0\}$ denote the indices with positive degrees and let $n_+=|\mathcal I_+|$. Spectral normalization is performed on these samples. Zero-degree samples receive labels through their latent features after clustering.

\paragraph{An exact reduced spectral problem}
Let $F_+\in\mathbb R^{n_+\times q}$ contain the rows indexed by $\mathcal I_+$ in increasing order. Define
\begin{equation}
 \begin{aligned}
 \Delta&=\operatorname{diag}(\delta_i:i\in\mathcal I_+)
       \in\mathbb R^{n_+\times n_+},\\
 B&=\Delta^{-1/2}F_+\in\mathbb R^{n_+\times q},\\
 G&=B^\top B\in\mathbb R^{q\times q}.
 \end{aligned}
 \label{eq:loom-normalization}
\end{equation}
The normalized affinity is $BB^\top$, and its normalized Laplacian is $I_{n_+}-BB^\top$. Thus, the smallest Laplacian eigenvalues correspond to the largest eigenvalues of $BB^\top$. The following result transfers their computation to $G$.

\begin{proposition}[Exact spectral reduction]
\label{prop:loom-spectral}
For $P^\top P=I_m$, the normalized feature matrix satisfies
\begin{equation}
 m\leq\operatorname{rank}(B)\leq\min\{n_+,q\}.
 \label{eq:loom-lift-rank}
\end{equation}
The positive eigenvalues of $G$ and $BB^\top$ coincide, including multiplicities. If $Gv_j=\nu_jv_j$ with $\nu_j>0$ and orthonormal $v_j\in\mathbb R^q$, then $Bv_j/\sqrt{\nu_j}$ are orthonormal eigenvectors of $BB^\top$ with eigenvalues $\nu_j$.
\end{proposition}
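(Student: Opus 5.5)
Two claims need proof: the rank bounds, and the standard transfer of eigenpairs between $B^\top B$ and $BB^\top$. I would do the eigen-transfer first, since it is pure linear algebra, and then turn to the lower bound $\operatorname{rank}(B)\ge m$, which is the only part that uses the projector structure.

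\textbf{Eigen-transfer.} Suppose $Gv_j=\nu_jv_j$ with $\nu_j>0$, and set $u_j=Bv_j/\sqrt{\nu_j}$. Then
\[
BB^\top u_j=B(B^\top B)v_j/\sqrt{\nu_j}=\nu_j u_j .
\]
Orthonormality follows from
\[
u_i^\top u_j=v_i^\top G v_j/\sqrt{\nu_i\nu_j}=\delta_{ij}.
\]
This map sends the positive eigenspaces of $G$ injectively into those of $BB^\top$. The symmetric map $u\mapsto B^\top u/\sqrt{\nu}$ sends them back. Hence the multiplicities agree, and the positive eigenvalues coincide.

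\textbf{Upper bound.} This is immediate from the size of $B\in\mathbb R^{n_+\times q}$.

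\textbf{Lower bound.} Since $\Delta^{-1/2}$ is invertible, $\operatorname{rank}(B)=\operatorname{rank}(F_+)$. Zero-degree rows satisfy $p_i=0$, so $\phi(p_i)=0$. Dropping those rows therefore does not change the rank: $\operatorname{rank}(F_+)=\operatorname{rank}(F)$.

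Next, $\operatorname{rank}(F)$ equals the dimension of the span of $\{\operatorname{svec}(p_ip_i^\top)\}$. Because $\operatorname{svec}$ is an isometry, this is the dimension of $\operatorname{span}\{p_ip_i^\top\}$ inside the symmetric matrices.

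The key observation comes from orthogonality:
\[
\sum_i p_ip_i^\top=P^\top P=I_m .
\]
This identity alone only shows that $I_m$ lies in the span, so I would use it more cleverly. Consider the linear map $S\mapsto PSP^\top$ restricted to diagonal matrices $S=\operatorname{diag}(s)$. I would rather argue via $F^\top$ acting on $\mathbb R^n$, namely
\[
F^\top w=\operatorname{svec}\Bigl(\sum_i w_ip_ip_i^\top\Bigr)=\operatorname{svec}(P^\top\operatorname{diag}(w)P).
\]
So $\operatorname{rank}(F)=\dim\{P^\top DP: D \text{ diagonal}\}$.

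To show this dimension is at least $m$, I would exhibit $m$ linearly independent matrices of the form $P^\top DP$, or equivalently prove that the map $w\mapsto P^\top\operatorname{diag}(w)P$ has rank at least $m$. Its composition with taking the diagonal gives
\[
w\mapsto \bigl(\textstyle\sum_i w_iP_{ij}^2\bigr)_{j}=(P\circ P)^\top w .
\]
So it suffices that $P\circ P$ has rank $m$. That need not hold in every basis, so I would exploit the freedom to rotate. The span is invariant under $S\mapsto R^\top SR$, so I may replace $P$ by $PR$ for any orthogonal $R$.

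An alternative route is cleaner. The rank of a set of rank-one matrices $p_ip_i^\top$ is at least $\dim\operatorname{span}\{p_i\}$. To see this, choose $m$ rows $p_{i_1},\dots,p_{i_m}$ forming a basis of $\mathbb R^m$; they exist because $P$ has rank $m$. Then $p_{i_1}p_{i_1}^\top,\dots,p_{i_m}p_{i_m}^\top$ are linearly independent. Suppose $\sum_k c_kp_{i_k}p_{i_k}^\top=0$. Let $y_l$ be the dual basis, with $p_{i_k}^\top y_l=\delta_{kl}$. Sandwiching the relation between $y_l^\top$ and $y_l$ gives $c_l=0$.

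This dual-basis argument gives $\operatorname{rank}(F)\ge m$ directly, and I would use it. The only subtle point, and the main obstacle, is making sure these basis rows have positive degree. They do, because they are nonzero.
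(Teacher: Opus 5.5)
Your proof is correct. The eigen-transfer and the upper bound match the paper's argument; the difference is in the lower bound $\operatorname{rank}(B)\ge m$.

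The paper uses exactly the identity you set aside. Take $s=\operatorname{rank}(F_+)$ rows that span the row space of $F_+$. The corresponding $s$ rank-one matrices span every $p_ip_i^\top$ with $i\in\mathcal I_+$, so their span contains $\sum_{i\in\mathcal I_+}p_ip_i^\top=P^\top P=I_m$. A linear combination of $s$ rank-one matrices has rank at most $s$, which gives $m\le s$. So your remark that $\sum_i p_ip_i^\top=I_m$ ``only shows that $I_m$ lies in the span'' undersells it. Combined with subadditivity of rank, it settles the bound in one line, and the detour through $P\circ P$ and basis rotations is unnecessary.

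Your dual-basis argument is a valid alternative, and it is constructive rather than counting-based. It shows that any $m$ rows of $P$ forming a basis of $\mathbb R^m$ lift to $m$ linearly independent rows $\phi(p_{i_k})$ of $F_+$. This gives an explicit certificate for the rank bound, at the cost of a slightly longer argument. It also needs a separate check that these rows are active, which you handle correctly, since nonzero rows have positive degree.

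Both routes use orthogonality only through $\operatorname{rank}(P)=m$; the paper's argument works verbatim with $P^\top P$ in place of $I_m$. Neither is more general than the other.
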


The proof is provided in Appendix~\ref{app:loom-spectral-proofs}. Since $k\leq m$, the reduced problem supplies at least $k$ positive eigenvalues. The embedding dimension is $k$, whereas $m$ controls the factor width and $q$ determines the reduced spectral dimension.

The degree identity also gives one dominant eigenpair analytically,
\begin{equation}
 v_1=\frac{\operatorname{svec}(I_m)}{\sqrt m},\qquad
 \nu_1=1,\qquad
 Bv_1=\frac{(\sqrt{\delta_i})_{i\in\mathcal I_+}}{\sqrt m}.
 \label{eq:loom-leading-vector}
\end{equation}
We retain this vector and compute the largest $k-1$ eigenpairs on $v_1^\perp$. This gives $k$ dominant modes while providing a strictly positive coordinate for every active sample. The sample embedding and its normalized rows are
\begin{equation}
 \begin{aligned}
 H&=B[v_1,\ldots,v_k]
       \operatorname{diag}(\nu_1,\ldots,\nu_k)^{-1/2}
       \in\mathbb R^{n_+\times k},\\
 \widehat H_{i:}&=H_{i:}/\|H_{i:}\|_2,\qquad i=1,\ldots,n_+.
 \end{aligned}
 \label{eq:loom-spectral-embedding}
\end{equation}
Equation~\eqref{eq:loom-leading-vector} ensures that every row norm is positive. Retaining the eigenvalue-one modes also preserves the component indicators when the affinity graph has $k$ connected components. In that case, row normalization maps each component to a distinct embedding point, as established in Appendix~\ref{app:loom-spectral-proofs}.

Following normalized spectral clustering~\cite{loom_njw}, we apply $k$-means to the rows of $\widehat H$. Let $\gamma_i\in\{1,\ldots,k\}$ denote the resulting label of active sample $i$. For a zero-degree sample, we use its distance to the latent centers of these clusters,
\begin{equation}
 \begin{aligned}
 \mathcal J_a&=\{i\in\mathcal I_+:\gamma_i=a\},\\
 \zeta_a&=\frac{1}{|\mathcal J_a|}\sum_{i\in\mathcal J_a}z_i
           \in\mathbb R^{d_z},\\
 \gamma_i&\in\arg\min_{a\in\{1,\ldots,k\}}
          \|z_i-\zeta_a\|_2^2,\qquad i\notin\mathcal I_+.
 \end{aligned}
 \label{eq:loom-isolate-assignment}
\end{equation}
The clustering conventions in Appendix~\ref{app:loom-implementation} ensure nonempty active clusters and specify initialization and tie-breaking. Together, these steps assign every sample to a cluster. The corresponding membership matrix is $\Gamma\in\{0,1\}^{n\times k}$ with $\Gamma_{ia}=\mathbf1\{\gamma_i=a\}$.

\paragraph{Direct and implicit implementations}
When $q$ is modest, we accumulate $G$ from normalized feature rows and solve its symmetric eigenproblem directly. For larger $q$, the same operator can be applied by streaming the rows of $P$. Let $\operatorname{smat}:\mathbb R^q\to\mathbb R^{m\times m}$ be the inverse of $\operatorname{svec}$ on symmetric matrices. For any $v\in\mathbb R^q$,
\begin{equation}
 Gv=\operatorname{svec}\!\left(
  \sum_{i\in\mathcal I_+}
  \frac{p_i^\top\operatorname{smat}(v)p_i}{\delta_i}
       p_ip_i^\top\right).
 \label{eq:loom-gram-action}
\end{equation}
Each operator evaluation takes $O(nm^2)$ time. A symmetric partial eigensolver uses this action on $v_1^\perp$ to recover the remaining dominant modes. Both implementations solve the spectral problem defined by~\eqref{eq:loom-affinity}. They use compact factors throughout, with $C$, $A$, and the sample Laplacian represented implicitly.

\begin{algorithm}[!t]
\caption{Spectral clustering through the quadratic feature map}
\label{alg:loom-spectral}
\begin{algorithmic}[1]
\Require $P\in\mathbb R^{n\times m}$ with $P^\top P=I_m$, $Z\in\mathbb R^{d_z\times n}$, $k\leq m$, $R_{\mathrm{km}},T_{\mathrm{km}}$.
\Ensure Cluster labels $\gamma_1,\ldots,\gamma_n$.
\State Compute $\delta_i$ by \eqref{eq:loom-degree} and retain the indices $\mathcal I_+$ with $\delta_i>0$.
\State Set $q=m(m+1)/2$ and the feature ordering in \eqref{eq:loom-lift}.
\State Set $v_1,\nu_1$ using \eqref{eq:loom-leading-vector}.
\If{$k>1$}
  \If{direct eigensolver}
    \State Accumulate $G=\sum_{i\in\mathcal I_+}\phi(p_i)\phi(p_i)^\top/\delta_i$ using \eqref{eq:loom-normalization}.
    \State Compute its largest $k-1$ eigenpairs on $v_1^\perp$.
  \Else
    \State Compute the largest $k-1$ eigenpairs on $v_1^\perp$ with a symmetric partial eigensolver using \eqref{eq:loom-gram-action}.
  \EndIf
\EndIf
\State Recover $H$ and normalize its rows by \eqref{eq:loom-spectral-embedding}.
\State Run $R_{\mathrm{km}}$ seeded $k$-means restarts using \eqref{eq:loom-kmeans}, with at most $T_{\mathrm{km}}$ iterations each.
\State Retain the restart with minimum distortion and restore the original sample indices.
\State Assign zero-degree samples by \eqref{eq:loom-isolate-assignment}.
\State \Return $\gamma_1,\ldots,\gamma_n$.
\end{algorithmic}
\end{algorithm}

\subsection{Complete Algorithm and Computational Complexity}
\label{sec:loom-complexity}

Algorithm~\ref{alg:loom-main} combines representation learning with spectral clustering. Let $T_{\mathrm{pre}}$ and $T$ be the numbers of pretraining updates and joint outer iterations, respectively. The spectral routine is given in Algorithm~\ref{alg:loom-spectral}. It uses $R_{\mathrm{km}}$ clustering restarts with at most $T_{\mathrm{km}}$ Lloyd iterations per restart.

\paragraph{Representation learning}
Let $c_{\mathrm{net}}$ bound the per-sample cost of network evaluation and backpropagation in one update, including the refreshed encoder evaluation. Pretraining takes $O(T_{\mathrm{pre}}nc_{\mathrm{net}})$ time. Seeding requires $O(nd_zm)$ time, and the initial Procrustes update requires $O(nd_zm+nm^2+m^3)$. Each joint iteration takes
\begin{equation}
 O\!\left(nc_{\mathrm{net}}+nd_zm+nm^2+m^3\right).
 \label{eq:loom-training-cost}
\end{equation}
These terms account for network training, the products $Z^\top L$ and $ZP$, and the thin SVD. The representation cost is the sum of pretraining and initialization costs plus $T$ times~\eqref{eq:loom-training-cost}. Gradient accumulation allows the activation workspace to be controlled through the chunk size.

\paragraph{Spectral computation}
Computing the degrees requires $O(nm)$ time. In the direct implementation, constructing the reduced matrix $G\in\mathbb R^{q\times q}$ costs $O(nq^2)$, and its eigendecomposition costs $O(q^3)$. Recovering the $k$-dimensional sample embedding requires $O(nqk)$ operations. The total complexity is therefore
\begin{equation}
 O(nm+nq^2+q^3+nqk),\qquad q=m(m+1)/2.
 \label{eq:loom-direct-cost}
\end{equation}
The dependence on $n$ appears only in the construction of the reduced matrix and the recovery of the sample embedding. The eigendecomposition is performed in the $q$-dimensional space.

For the implicit implementation, the matrix $G$ is not formed explicitly. Let $T_{\mathrm{eig}}$ denote the total number of matrix--vector products required by the iterative eigensolver, $s_{\mathrm{eig}}$ the maximum Krylov basis size, and $R_{\mathrm{eig}}$ the number of restarts. Including operator evaluations, orthogonalization, and the projected eigenproblems, the total complexity is
\begin{equation}
 O\!\left(
 nm
 +T_{\mathrm{eig}}(nq+qs_{\mathrm{eig}})
 +R_{\mathrm{eig}}(qs_{\mathrm{eig}}^2+s_{\mathrm{eig}}^3)
 +nqk
 \right).
 \label{eq:loom-iterative-cost}
\end{equation}
Each vector processed by a block eigensolver contributes one matrix--vector product to $T_{\mathrm{eig}}$. The quantities $T_{\mathrm{eig}}$, $s_{\mathrm{eig}}$, and $R_{\mathrm{eig}}$ depend on the convergence of the eigensolver and the required numerical accuracy. Since the implicit implementation evaluates products with $G$ only when needed, it avoids storing the dense $q\times q$ matrix and is preferable when $q$ is large.

\paragraph{Clustering and memory}
The $k$-dimensional embedding gives a clustering cost of $O(R_{\mathrm{km}}(T_{\mathrm{km}}+1)nk^2)$. Computing latent centers and assigning zero-degree samples adds at most $O(nd_zk)$. Let $M_{\mathrm{net}}$ include network parameters, optimizer state and the activation workspace. A resident-data memory bound is
\begin{equation}
 O\!\left(n(d_x+d_z+m+k)+d_zm+M_{\mathrm{net}}\right),
 \label{eq:loom-base-memory}
\end{equation}
plus $O(q^2+qk)$ for direct reduction or $O(qs_{\mathrm{eig}}+qk)$ for the implicit eigensolver. Quadratic feature rows can be evaluated as needed, so storing $F$ is optional. The output labels require $O(n)$ memory.

Together, these bounds establish linear time and memory in $n$ for fixed dimensions, architecture, factor width, cluster count and iteration budgets. The factor width $m$ is a chosen model parameter, so both phases operate without estimating the data rank. Their shared use of compact factors makes scalability extend from representation learning to the final partition.

\section{Results and Discussion}
\label{sec:experiments}

\paragraph{Protocol}
We evaluate LoomSC on five image-clustering benchmarks. The face datasets are Extended YaleB (EYaleB), ORL, and UMIST, with $(n,k)=(2{,}432,38)$, $(400,40)$, and $(575,20)$, respectively. We also use COIL-100 with $7{,}200$ images from $100$ object classes and a balanced subset of Fashion-MNIST with $5{,}000$ images from $10$ classes. Each sample is normalized to unit $\ell_2$ norm. Tables~\ref{tab:acc} and~\ref{tab:time} specify the sample counts used in the comparisons.

We compare with nine baselines that distinguish deep representation learning from computational scalability. The \emph{scalable non-deep methods} comprise two groups. LMVSC~\cite{paper15}, SGL~\cite{paper36}, and S$^5$C~\cite{loom_s5c} use sampling or anchor-based reduction. EnSC~\cite{paper19}, SSC-OMP~\cite{paper17}, A-DSSC~\cite{paper22}, and SSSC~\cite{paper18} accelerate self-expression estimation without learning deep representations. These methods target scalability, but do not all achieve linear complexity for the complete clustering pipeline.

The \emph{non-linear-time full-affinity references} are LSR~\cite{paper9} and PRO-DSC~\cite{loom_pro_dsc}. LSR is non-deep and computes a dense self-expression matrix. PRO-DSC is deep and supports mini-batch training. Its published spectral inference nevertheless constructs pairwise coefficients over the clustering set and processes the resulting affinity. Thus, scalable training does not eliminate its full-affinity spectral bottleneck. LoomSC combines deep representation learning with factorized self-expression and spectral assignment. Its complete pipeline has linear time and memory complexity in $n$ for fixed architecture, dimensions, factor width, and iteration budgets.

A detailed comparison of computational and matrix-storage complexities is provided in Appendix~\ref{app:complexity-comparison}.

For LoomSC, we use the same architecture and optimization hyperparameters across all datasets. The encoder contains $L=3$ convolutional layers with $10$, $20$, and $30$ output channels and kernel sizes $5\times5$, $3\times3$, and $3\times3$, respectively. Each layer uses stride $2$ and a ReLU activation, with no batch normalization. The decoder mirrors the encoder using transposed convolutions. The resulting convolutional representation has size $30\times6\times6$, which gives a flattened latent dimension of $d_z=1080$. We set the factor width to $m=380$ for all datasets. The auto-encoder is first pretrained for $5{,}000$ epochs using full-batch Adam with a learning rate of $10^{-3}$. No data augmentation, weight decay, or learning-rate scheduling is used.

After pretraining, $k$-means with $k$-means++ initialization is applied to the latent representations, and its $m$ centroids initialize the columns of $L$. The factor $P$ is then initialized by the orthogonal Procrustes solution. Joint optimization is performed for another $5{,}000$ epochs using full-batch Adam with a learning rate of $10^{-3}$. The reconstruction and self-expression terms have equal weights. At each joint-training epoch, the network parameters are first updated by Adam. The factor $P$ is then updated by the exact Procrustes solution, followed by the closed-form update $L=ZP$, as described in Algorithm~\ref{alg:loom-main}. After joint optimization, spectral assignment is performed following Algorithm~\ref{alg:loom-spectral}. The spectral embedding dimension is set to the target number of clusters $k$, after which $k$-means produces the final partition.

We report clustering accuracy (ACC), normalized mutual information (NMI), and adjusted Rand index (ARI) as percentages. Higher values indicate better clustering. Repeated-run results are summarized by their mean and standard deviation over three seeds. A scalar entry indicates that no standard deviation is reported. Differences between percentage scores are expressed in percentage points. The synthetic experiments additionally report the connectivity diagnostic (CONN)~\cite{chen2020stochastic}, for which higher values are preferred.

\subsection{Clustering quality}

Tables~\ref{tab:acc}--\ref{tab:ari} show that LoomSC ranks first or second in all $15$ dataset--metric comparisons, including a shared second place for COIL-100 NMI. No other evaluated method maintains this ranking across all five datasets and three metrics. Its unweighted means are $80.17\%$ ACC, $85.98\%$ NMI, and $72.77\%$ ARI. These exceed the highest corresponding baseline means by $6.66$, $4.09$, and $10.02$ points. Thus, the compact formulation provides consistently strong clustering quality while retaining linear-in-$n$ computation under the conditions stated above.

\begin{table*}[t]
\centering
\setlength{\tabcolsep}{3pt}
\caption{Clustering quality --- ACC (\%) on the five benchmarks. Higher is better. Mean $\pm$ standard deviation over three seeds where available. Bold indicates the largest mean in each row.}
\label{tab:acc}
\resizebox{\textwidth}{!}{%
\begin{tabular}{lcc*{10}{c}}
\toprule
\multicolumn{3}{c}{} & \multicolumn{8}{c}{Non-deep methods} & \multicolumn{2}{c}{Deep methods} \\
\cmidrule(lr){4-11}\cmidrule(lr){12-13}
Dataset & $n$ & $k$ & EnSC & SSC-OMP & LMVSC & SGL & S$^5$C & LSR & A-DSSC & SSSC & PRO-DSC & LoomSC \\
\midrule
Fashion-MNIST & 5,000 & 10 & 61.35 $\pm$ 0.00 & 56.19 $\pm$ 1.36 & 59.18 & 54.47 $\pm$ 0.36 & 54.62 $\pm$ 1.99 & 59.52 $\pm$ 0.00 & 39.01 $\pm$ 2.20 & 42.01 $\pm$ 1.70 & 52.12 $\pm$ 2.92 & \textbf{65.33 $\pm$ 1.79} \\
ORL & 400 & 40 & 78.00 $\pm$ 1.59 & 73.75 $\pm$ 0.82 & 55.25 $\pm$ 0.00 & 62.25 $\pm$ 0.00 & 42.50 $\pm$ 0.00 & 82.50 $\pm$ 0.00 & 50.75 $\pm$ 0.00 & 46.50 $\pm$ 1.54 & 52.00 $\pm$ 7.91 & \textbf{86.17 $\pm$ 2.01} \\
UMIST & 575 & 20 & 72.52 $\pm$ 0.00 & 67.70 $\pm$ 0.00 & 60.62 $\pm$ 0.00 & 65.00 $\pm$ 0.00  & 40.87 $\pm$ 0.00 & 58.26 $\pm$ 0.00 & 52.52 $\pm$ 0.00 & 21.16 $\pm$ 1.31 & \textbf{81.39 $\pm$ 1.58} & 80.00 $\pm$ 2.19 \\
EYaleB & 2,432 & 38 & 88.19 $\pm$ 0.00  & 80.84 $\pm$ 1.34 & 21.50 $\pm$ 0.00 & 21.50 $\pm$ 0.00  & 22.41 $\pm$ 0.00 & \textbf{96.26 $\pm$ 0.00} & 61.88 $\pm$ 0.00 & 25.11 $\pm$ 2.27 & 77.63 $\pm$ 0.73 & 94.73 $\pm$ 1.26 \\
COIL-100 & 7,200 & 100 & 67.50 $\pm$ 0.69 & 59.46 $\pm$ 0.00  & 49.50 $\pm$ 0.00  & 49.26 $\pm$ 0.00 & 42.04 $\pm$ 0.00 & 61.10 $\pm$ 0.00 & 48.57 $\pm$ 0.00 & 13.81 $\pm$ 1.31 & \textbf{82.31 $\pm$ 0.30} & 74.64 $\pm$ 1.34 \\
\bottomrule
\end{tabular}}
\end{table*}

\begin{table*}[t]
\centering
\setlength{\tabcolsep}{3pt}
\caption{Clustering quality --- NMI (\%) on the five benchmarks. Higher is better. Mean $\pm$ standard deviation over three seeds. Bold indicates the largest mean in each row.}
\label{tab:nmi}
\resizebox{\textwidth}{!}{%
\begin{tabular}{lccccccccccc}
\toprule
\multicolumn{1}{c}{} & \multicolumn{8}{c}{Non-deep methods} & \multicolumn{2}{c}{Deep methods} \\
\cmidrule(lr){2-9}\cmidrule(lr){10-11}
Dataset & EnSC & SSC-OMP & LMVSC & SGL & S$^5$C & LSR & A-DSSC & SSSC & PRO-DSC & LoomSC \\
\midrule
Fashion-MNIST & 60.29 $\pm$ 0.88 & 57.93 $\pm$ 0.41 & \textbf{64.37 $\pm$ 0.63} & 62.51 $\pm$ 0.42 & 54.82 $\pm$ 2.33 & 63.33 $\pm$ 0.00 & 53.83 $\pm$ 1.00 & 46.88 $\pm$ 0.45 & 56.45 $\pm$ 3.08 & 63.91 $\pm$ 0.82 \\
ORL & 89.25 $\pm$ 0.07 & 84.54 $\pm$ 0.37 & 74.53 $\pm$ 0.00 & 78.24 $\pm$ 0.00 & 64.76 $\pm$ 0.00 & 90.07 $\pm$ 0.00 & 77.93 $\pm$ 0.00 & 66.84 $\pm$ 1.34 & 71.71 $\pm$ 5.39 & \textbf{91.76 $\pm$ 0.88} \\
UMIST & 87.06 $\pm$ 0.00 & 63.61 $\pm$ 0.53 & 62.49 $\pm$ 0.00 & 56.84 $\pm$ 0.00 & 56.72 $\pm$ 0.00 & 75.40 $\pm$ 0.00 & 66.01 $\pm$ 0.00 & 32.89 $\pm$ 1.40 & \textbf{90.02 $\pm$ 0.20} & 87.83 $\pm$ 1.44 \\
EYaleB & 74.31 $\pm$ 0.03 & 83.17 $\pm$ 0.55 & 13.37 $\pm$ 0.00 & 6.44 $\pm$ 0.00 & 37.92 $\pm$ 0.00 & 95.44 $\pm$ 0.00 & 75.61 $\pm$ 0.00 & 34.24 $\pm$ 2.42 & 82.38 $\pm$ 0.91 & \textbf{95.47 $\pm$ 0.43} \\
COIL-100 & 90.95 $\pm$ 0.02 & 57.11 $\pm$ 0.16 & 74.69 $\pm$ 0.00 & 74.54 $\pm$ 0.00 & 68.03 $\pm$ 0.00 & 85.21 $\pm$ 0.00 & 74.31 $\pm$ 0.00 & 37.80 $\pm$ 1.08 & \textbf{95.75 $\pm$ 0.05} & 90.95 $\pm$ 0.41 \\
\bottomrule
\end{tabular}}
\end{table*}

\begin{table*}[t]
\centering
\setlength{\tabcolsep}{3pt}
\caption{Clustering quality --- ARI (\%) on the five benchmarks. Higher is better. Mean $\pm$ standard deviation over three seeds. Bold indicates the largest mean in each row.}
\label{tab:ari}
\resizebox{\textwidth}{!}{%
\begin{tabular}{lccccccccccc}
\toprule
\multicolumn{1}{c}{} & \multicolumn{8}{c}{Non-deep methods} & \multicolumn{2}{c}{Deep methods} \\
\cmidrule(lr){2-9}\cmidrule(lr){10-11}
Dataset & EnSC & SSC-OMP & LMVSC & SGL & S$^5$C & LSR & A-DSSC & SSSC & PRO-DSC & LoomSC \\
\midrule
Fashion-MNIST & 43.21 $\pm$ 1.04 & 42.00 $\pm$ 0.16 & 45.35 $\pm$ 0.87 & 42.75 $\pm$ 0.22 & 40.27 $\pm$ 1.50 & 46.53 $\pm$ 0.00 & 22.24 $\pm$ 1.29 & 19.41 $\pm$ 1.49 & 37.76 $\pm$ 4.07 & \textbf{51.89 $\pm$ 1.43} \\
ORL & 69.50 $\pm$ 0.55 & 60.22 $\pm$ 0.74 & 39.22 $\pm$ 0.00 & 47.60 $\pm$ 0.00 & 22.66 $\pm$ 0.00 & 74.10 $\pm$ 0.00 & 21.49 $\pm$ 0.00 & 15.32 $\pm$ 4.98 & 35.55 $\pm$ 8.75 & \textbf{78.05 $\pm$ 2.67} \\
UMIST & 66.91 $\pm$ 0.00 & 34.81 $\pm$ 0.47 & 32.12 $\pm$ 0.00 & 25.19 $\pm$ 0.00 & 26.63 $\pm$ 0.00 & 46.89 $\pm$ 0.00 & 22.99 $\pm$ 0.00 & 1.07 $\pm$ 0.31 & \textbf{79.36 $\pm$ 0.65} & 75.55 $\pm$ 2.69 \\
EYaleB & 19.67 $\pm$ 0.06 & 53.91 $\pm$ 5.11 & 1.87 $\pm$ 0.00 & -0.40 $\pm$ 0.00 & 13.65 $\pm$ 0.00 & \textbf{92.19 $\pm$ 0.00} & 21.67 $\pm$ 0.00 & 2.16 $\pm$ 0.31 & 64.15 $\pm$ 1.28 & 91.76 $\pm$ 0.90 \\
COIL-100 & 61.38 $\pm$ 1.84 & 21.98 $\pm$ 0.13 & 43.00 $\pm$ 0.00 & 43.93 $\pm$ 0.00 & 35.05 $\pm$ 0.00 & 54.03 $\pm$ 0.00 & 7.84 $\pm$ 0.00 & 1.59 $\pm$ 0.04 & \textbf{80.63 $\pm$ 1.28} & 66.60 $\pm$ 0.66 \\
\bottomrule
\end{tabular}}
\end{table*}

LoomSC achieves higher ACC than every scalable non-deep baseline on all five datasets. The improvements over the strongest method in this group range from $3.98$ to $8.17$ points. The gains over sampling-based methods are larger. Relative to the best ACC among LMVSC, SGL, and S$^5$C, LoomSC improves by $23.92$ points on ORL and $25.14$ points on COIL-100. The corresponding gains are $15.00$ points on UMIST and $72.32$ points on EYaleB. These comparisons show that a compact model need not be restricted to the clustering quality of fixed-representation sampling methods. LoomSC retains compactness while jointly refining the latent features and their self-expression factors.

On Fashion-MNIST, LoomSC obtains the highest ACC of $65.33\pm1.79\%$ and ARI of $51.89\pm1.43\%$. These exceed the strongest competing results by $3.98$ and $5.36$ points, respectively. Its NMI of $63.91\pm0.82\%$ is within $0.46$ points of the highest value. The agreement between ACC and ARI supports the quality of the partition beyond a single evaluation criterion.

On ORL, LoomSC leads all three metrics with $86.17\pm2.01\%$ ACC, $91.76\pm0.88\%$ NMI, and $78.05\pm2.67\%$ ARI. It improves upon LSR, the strongest alternative for each metric, by $3.67$, $1.69$, and $3.95$ points. These results are obtained with only ten images per subject. On EYaleB, LoomSC reaches $94.73\%$ ACC, $95.47\%$ NMI, and $91.76\%$ ARI. Its NMI closely matches LSR at $95.44\%$, and their ARI scores differ by only $0.43$ points. Thus, the thin-factor representation can match or improve upon dense least-squares self-expression on several clustering measures without storing all pairwise coefficients.

The comparison with PRO-DSC assesses whether this efficiency is compatible with deep clustering quality. LoomSC exceeds PRO-DSC on all three metrics for Fashion-MNIST, ORL, and EYaleB. The ACC gains are $13.21$, $34.17$, and $17.10$ points, respectively. Across the five datasets, its average improvements over PRO-DSC are $11.08$ ACC points, $6.72$ NMI points, and $13.28$ ARI points. Therefore, the advantage over the non-deep baselines is accompanied by a strong comparison with a deep full-affinity method.

PRO-DSC attains the highest scores on UMIST and COIL-100, where LoomSC ranks second on all three metrics. On UMIST, LoomSC reaches $80.00\%$ ACC compared with $81.39\%$ for PRO-DSC. On COIL-100, it obtains $74.64\%$ ACC, $90.95\%$ NMI, and $66.60\%$ ARI. Its COIL-100 ACC exceeds every non-deep baseline, with gains of $7.14$ points over EnSC and $13.54$ points over LSR. These results establish competitive deep clustering performance on both datasets. As quantified next, LoomSC attains this performance at a small fraction of PRO-DSC's measured runtime and without its non-linear-time full-affinity spectral stage.

\subsection{Computational efficiency}

Table~\ref{tab:time} reports the measured wall-clock times. LoomSC requires $0.239$~s on ORL and $0.203$~s on UMIST. Its mean time is $2.240$~s on EYaleB, $9.008$~s on Fashion-MNIST, and $25.311$~s on COIL-100. No competing method combines higher mean ACC with lower mean runtime on any of these datasets. Thus, LoomSC lies on the empirical ACC--runtime Pareto frontier for every benchmark among the evaluated implementations.

\begin{table*}[t]
\centering
\setlength{\tabcolsep}{3pt}
\caption{Measured wall-clock time (s). Lower is better. Bold indicates the smallest mean in each row.}
\label{tab:time}
\resizebox{\textwidth}{!}{%
\begin{tabular}{lcccccccccccc}
\toprule
\multicolumn{2}{c}{} & \multicolumn{8}{c}{Non-deep methods} & \multicolumn{2}{c}{Deep methods} \\
\cmidrule(lr){3-10}\cmidrule(lr){11-12}
Dataset & $n$ & EnSC & SSC-OMP & LMVSC & SGL & S$^5$C & LSR & A-DSSC & SSSC & PRO-DSC & LoomSC \\
\midrule
Fashion-MNIST & 5,000 & 3219.752 $\pm$ 46.948 & \textbf{5.447 $\pm$ 0.681} & 16.965 $\pm$ 0.179 & 17.203 $\pm$ 0.219 & 8.280 $\pm$ 1.000 & 6.260 $\pm$ 0.157 & 35.612 $\pm$ 0.584 & 2231.314 $\pm$ 69.877 & 563.338 $\pm$ 1.392 & 9.008 $\pm$ 0.987 \\
ORL & 400 & 7.953 $\pm$ 0.514 & 1.856 $\pm$ 0.058 & 0.929 $\pm$ 0.031 & 0.958 $\pm$ 0.003 & 7.440 $\pm$ 0.419 & \textbf{0.049 $\pm$ 0.001} & 0.699 $\pm$ 0.005 & 7.084 $\pm$ 0.183 & 102.797 $\pm$ 2.671 & 0.239 $\pm$ 0.089 \\
UMIST & 575 & 24.366 $\pm$ 26.899 & 0.498 $\pm$ 0.012 & 1.086 $\pm$ 0.033 & 1.263 $\pm$ 0.043 & 10.249 $\pm$ 4.946 & \textbf{0.077 $\pm$ 0.002} & 1.026 $\pm$ 0.027 & 9.774 $\pm$ 1.423 & 78.689 $\pm$ 0.172 & 0.203 $\pm$ 0.003 \\
EYaleB & 2,432 & 43.843 $\pm$ 0.398 & 12.666 $\pm$ 0.975 & 9.124 $\pm$ 0.063 & 9.221 $\pm$ 0.083 & 471.768 $\pm$ 24.647 & 2.389 $\pm$ 0.542 & 13.918 $\pm$ 0.132 & 2286.421 $\pm$ 94.488 & 313.648 $\pm$ 2.670 & \textbf{2.240 $\pm$ 0.581} \\
COIL-100 & 7,200 & 50.858 $\pm$ 1.523 & 1319.586 $\pm$ 164.863 & \textbf{21.421 $\pm$ 0.143} & 22.498 $\pm$ 0.390 & 4709.568 $\pm$ 169.023 & 49.888 $\pm$ 5.451 & 175.328 $\pm$ 1.551 & 2409.599 $\pm$ 245.492 & 7443.627 $\pm$ 4.992  & 25.311 $\pm$ 9.976 \\
\bottomrule
\end{tabular}}
\end{table*}

The distinction from PRO-DSC concerns the complete clustering pipeline rather than network training alone. PRO-DSC reduces training costs through mini-batches, but its published inference constructs a full pairwise affinity before spectral clustering~\cite{loom_pro_dsc}. Explicitly storing this affinity requires $O(n^2)$ memory for a clustering set of size $n$. LoomSC instead retains $P\in\mathbb R^{n\times m}$ and $L\in\mathbb R^{d_z\times m}$ during representation learning. Its quadratic feature map transfers spectral computation to dimension $q=m(m+1)/2$, which is independent of $n$ for fixed $m$. Consequently, it removes the full-affinity bottleneck rather than only accelerating its construction. Section~\ref{sec:loom-complexity} establishes linear time and memory in $n$ under fixed dimensions and iteration budgets.

The reported runtimes show substantial savings relative to the deep baseline. LoomSC is $62.5\times$ faster than PRO-DSC on Fashion-MNIST while improving ACC by $13.21$ points. On ORL, it is $430.1\times$ faster and improves ACC by $34.17$ points. On EYaleB, the corresponding speedup is $140.0\times$, together with a $17.10$-point ACC gain. Hence, on three datasets, the computational reduction accompanies higher clustering quality rather than a loss of accuracy.

The datasets where PRO-DSC achieves its strongest results further illustrate the scalability advantage. On UMIST, LoomSC attains $80.00\%$ ACC in $0.203$~s, compared with $81.39\%$ in $78.689$~s for PRO-DSC. This is a $387.6\times$ speedup with an ACC difference of $1.39$ points. On COIL-100, LoomSC obtains the second-highest ACC while reducing the measured time from $7{,}443.627$ to $25.311$~s, a $294.1\times$ speedup. Across all five datasets, the speedups over PRO-DSC range from $62.5\times$ to $430.1\times$. Therefore, competitive deep clustering performance is retained without the runtime of the full-affinity reference.

LoomSC also improves the quality--cost balance relative to scalable non-deep methods. On Fashion-MNIST, it exceeds EnSC by $3.98$ ACC points with a $357.4\times$ speedup. On COIL-100, it improves ACC over EnSC and LSR by $7.14$ and $13.54$ points while approximately halving their runtimes. Compared with LMVSC, the fastest method on COIL-100, LoomSC requires only $3.89$ additional seconds and improves ACC by $25.14$ points. These results show that LoomSC combines runtimes close to or below those of sampling methods with substantially stronger clustering. Its compact pipeline also remains far faster than the deep full-affinity reference.

\begin{figure*}
\centering
\includegraphics[width=0.68\textwidth,viewport=460 304.536 963 353.536,clip]{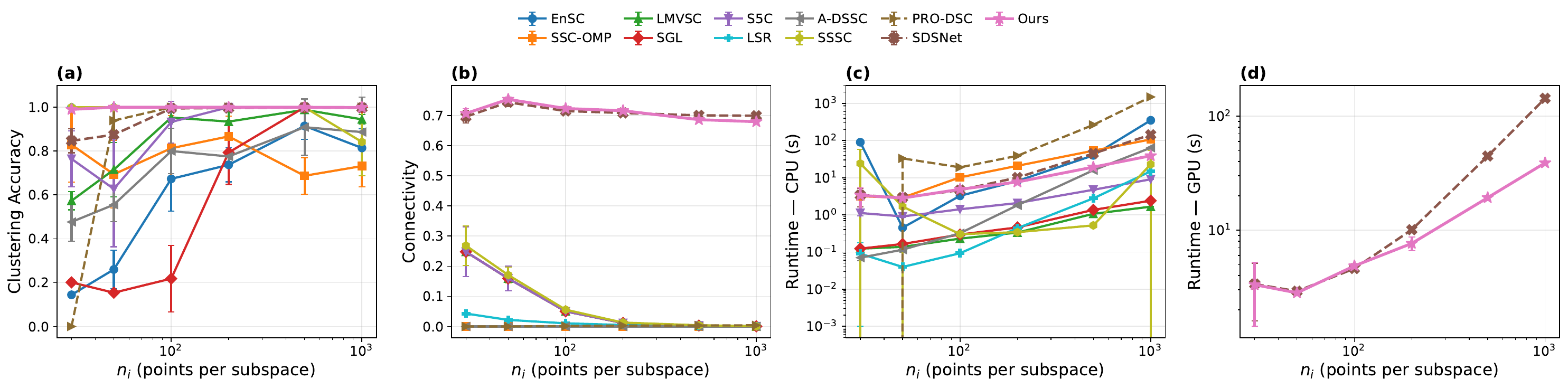}\\[3pt]
\begin{minipage}{0.48\textwidth}
\centering
\includegraphics[width=\linewidth,viewport=0 3.536 349 303.536,clip]{images/figure3_performance.pdf}
\end{minipage}\hfill
\begin{minipage}{0.48\textwidth}
\centering
\includegraphics[width=\linewidth,viewport=355 3.536 703 303.536,clip]{images/figure3_performance.pdf}
\end{minipage}\\[4pt]
\begin{minipage}{0.48\textwidth}
\centering
\includegraphics[width=\linewidth,viewport=705 3.536 1063 303.536,clip]{images/figure3_performance.pdf}
\end{minipage}\hfill
\begin{minipage}{0.48\textwidth}
\centering
\includegraphics[width=\linewidth,viewport=1070 3.536 1424.709 303.536,clip]{images/figure3_performance.pdf}
\end{minipage}
\caption{Synthetic-data comparison with ten subspaces and $n_i$ samples per subspace. Panels show (a) ACC, (b) connectivity, (c) CPU runtime, and (d) GPU runtime. ``Ours'' denotes LoomSC.}
\label{fig:synthetic_performance}
\end{figure*}

\subsection{Scalability Analysis}

We first vary the sampling density in a controlled union-of-subspaces setting. Following the synthetic-data configuration in~\cite{paper18}, we generate $k=10$ subspaces with random dimensions between $6$ and $12$ in an ambient space of dimension $d_x=784$. Each subspace contains $n_i\in\{30,50,100,200,500,1000\}$ samples. The total sample count is therefore $n=10n_i$, ranging from $300$ to $10{,}000$. Figure~\ref{fig:synthetic_performance} compares ACC, connectivity, and CPU and GPU runtimes.

LoomSC maintains $99\%$--$100\%$ ACC across the entire density range. Its connectivity remains high, at $0.68$--$0.75$. Several competing methods attain high ACC at sufficiently large sample counts but have much smaller connectivity scores. Thus, the two diagnostics provide complementary evidence: LoomSC produces accurate partitions while retaining strong graph connectivity under the reported measure. This behavior is consistent with the goal of preserving informative sample relations through the factorized representation.

The comparison with the SDSNet~\cite{paper38} reference is particularly informative at low sampling densities. With $30$ samples per subspace, LoomSC reaches $99\%$ ACC, compared with about $85\%$ for SDSNet. With $50$ samples per subspace, its ACC is $100\%$, compared with about $87\%$. At the largest size, the two approaches both retain near-perfect accuracy and high connectivity. LoomSC nevertheless reduces the measured runtime from $143$ to $39$~s in both the CPU and GPU panels. The improvement is visible within each hardware setting rather than arising only from a CPU-to-GPU comparison.

The same experiment contrasts LoomSC with the full-affinity deep baseline. PRO-DSC also reaches near-perfect ACC at higher sampling densities. However, its CPU runtime is substantially larger at the highest density. LoomSC combines high clustering accuracy across the full density range with lower computational cost at the largest size in this experiment.

Figure~\ref{fig:synthetic_scalability} extends the sample-size experiment from $1{,}000$ to $500{,}000$ points. LoomSC maintains $99.8\%$ ACC or higher throughout this $500$-fold increase. Its runtime grows from about $57$~s at $50{,}000$ samples to $362$~s at $500{,}000$ samples. At the largest size, it retains near-perfect accuracy in approximately six minutes. S$^5$C reaches comparable accuracy but requires about $1{,}054$~s. LMVSC has a lower runtime of about $125$~s, with ACC near $94\%$. SGL also reaches near-perfect ACC at large sizes, whereas LoomSC maintains this accuracy regime across the full tested range, including the smallest datasets.

\begin{figure*}[t]
\centering
\includegraphics[width=\textwidth]{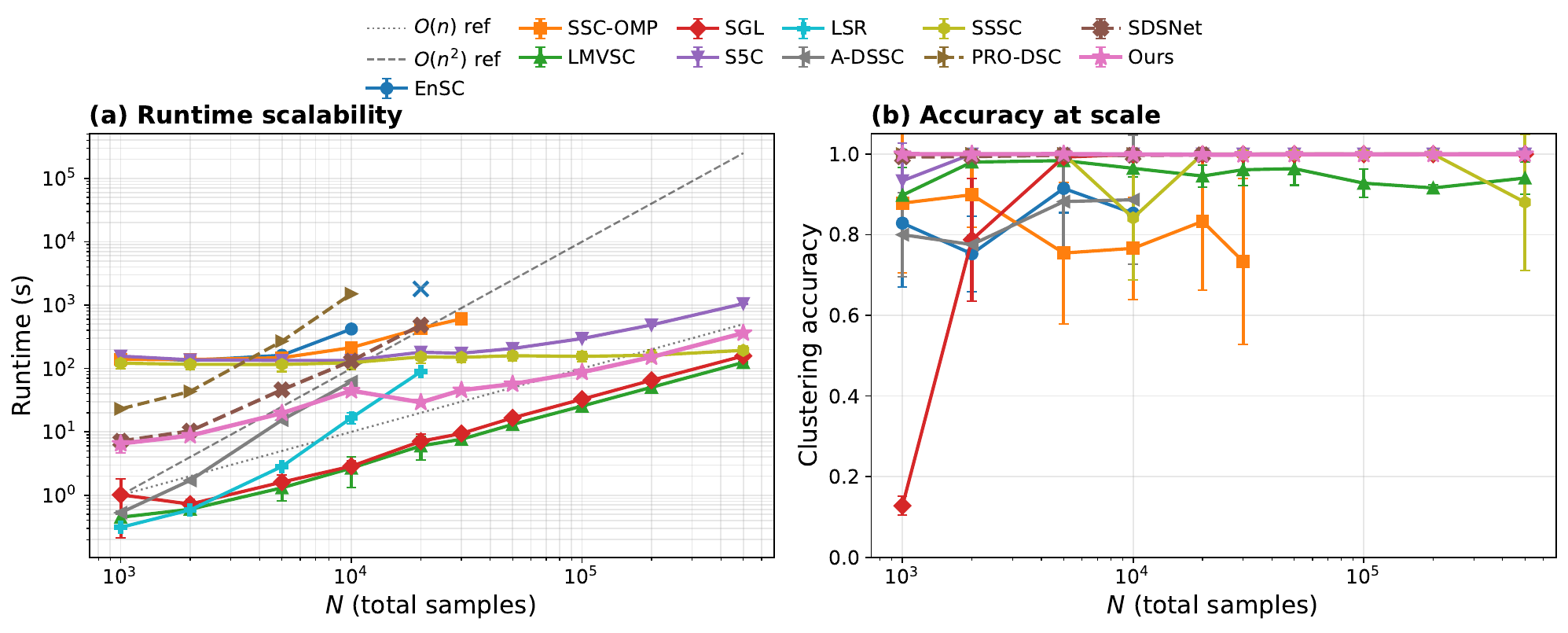}
\caption{Synthetic-data scalability from $1{,}000$ to $500{,}000$ samples: (a) runtime and (b) clustering accuracy. The horizontal-axis symbol $N$ denotes the total sample count $n$. The reference curves indicate linear and quadratic growth. ``Ours'' denotes LoomSC.}
\label{fig:synthetic_scalability}
\end{figure*}

The large-sample runtime trend is consistent with the linear-in-$n$ bounds in Section~\ref{sec:loom-complexity}. For fixed factor width, the quadratic feature dimension $q=m(m+1)/2$ does not grow with the number of samples. Neither factor learning nor the reduced spectral stage requires storage of all pairwise affinities. With fixed dimensions and iteration budgets, the sample-dependent work remains linear. The experiment shows that this design supports increasing the dataset size without a corresponding loss of clustering accuracy.

\subsection{Convergence of the Affinity Matrix}

Figure~\ref{fig:affinity_convergence} examines the relative Frobenius change between successive affinity iterates. The diagnostic divides the Frobenius norm of the difference by the norm of the previous iterate. This difference measures the stability of the sample relationships used for clustering, rather than the scale of the matrix alone.

\begin{figure*}[t]
\centering
\includegraphics[width=\textwidth]{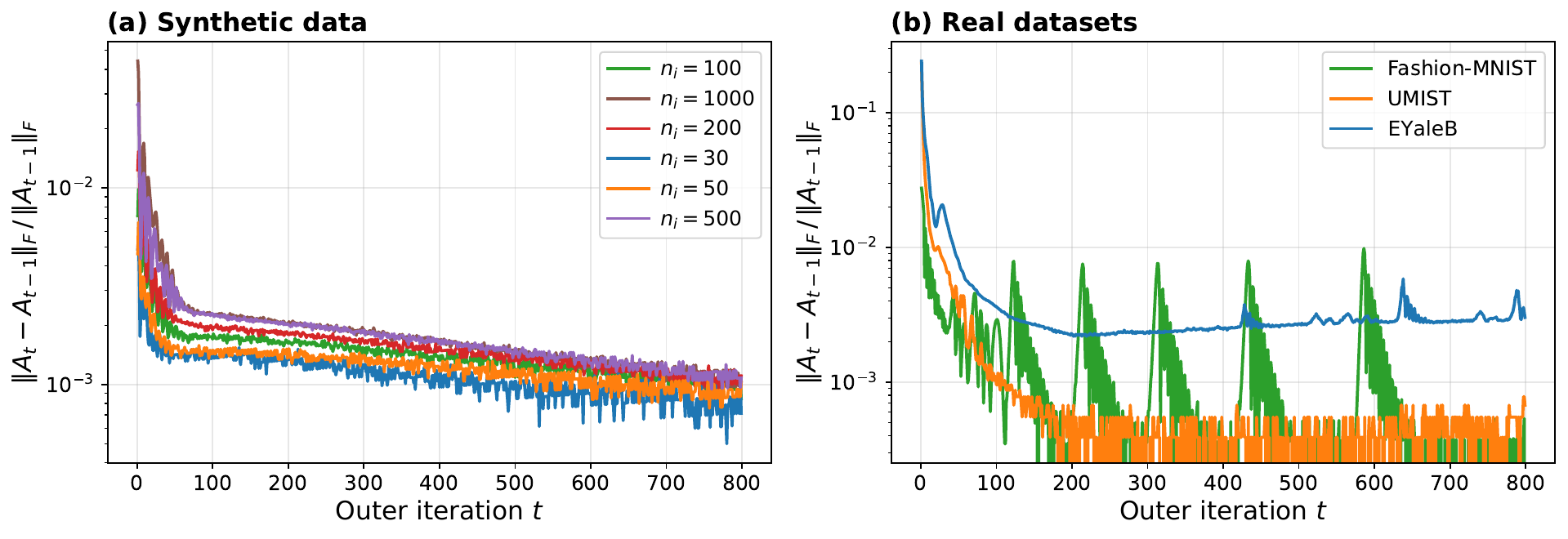}
\caption{Relative Frobenius change $\|A_t-A_{t-1}\|_F/\|A_{t-1}\|_F$ on (a) synthetic datasets with ten subspaces and (b) real datasets. Here, $A_t$ denotes the affinity matrix in~\eqref{eq:loom-affinity} at outer iteration $t$, and $n_i$ is the number of samples in synthetic subspace $i$.}
\label{fig:affinity_convergence}
\end{figure*}

On synthetic data, all six sampling densities exhibit a short initial transient followed by gradual stabilization. The relative change falls to a few $10^{-3}$ within roughly the first $100$ outer iterations and approaches $10^{-3}$ by the end of the run. The curves reach similar terminal scales despite the increase from $30$ to $1{,}000$ samples per subspace. Thus, the larger synthetic datasets retain the same qualitative stabilization pattern.

The real-data curves show similarly small late-stage changes. UMIST falls below $10^{-3}$ after about $100$ iterations. Fashion-MNIST reaches a low baseline with temporary excursions, while EYaleB settles around a few $10^{-3}$. The dominant pattern is a rapid reduction in the update magnitude followed by sustained small changes. This supports the practical stability of the learned affinities during joint training. Proposition~\ref{prop:loom-descent} provides a complementary descent result for the exact factor updates with fixed latent features. The observed curves show that small affinity changes are also attained when these updates are coupled with representation learning.

\begin{figure*}
\centering
\begin{minipage}[b]{0.32\textwidth}
  \centering
  \includegraphics[width=\linewidth]{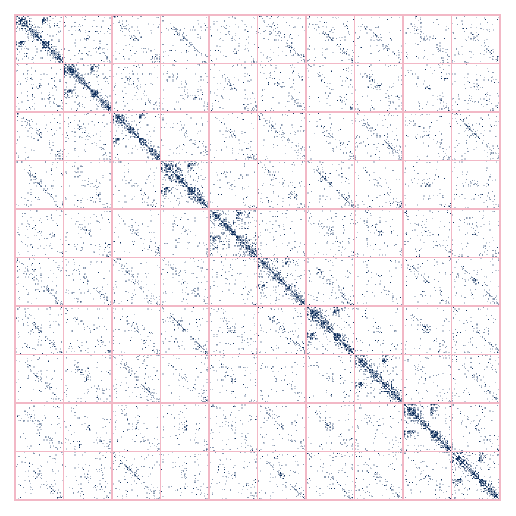}
  \\[2pt] \small (a) Epoch 10
\end{minipage}
\hfill
\begin{minipage}[b]{0.32\textwidth}
  \centering
  \includegraphics[width=\linewidth]{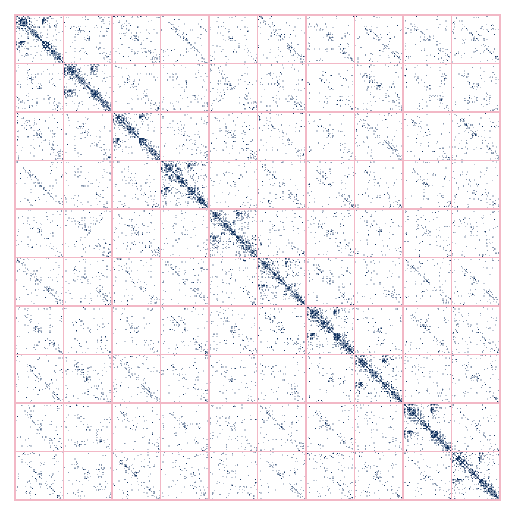}
  \\[2pt] \small (b) Epoch 100
\end{minipage}
\hfill
\begin{minipage}[b]{0.32\textwidth}
  \centering
  \includegraphics[width=\linewidth]{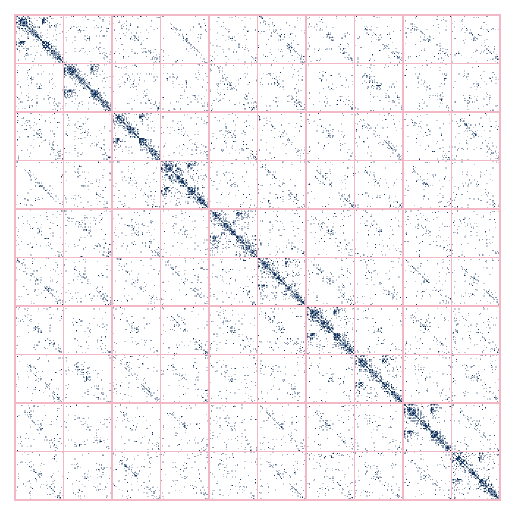}
  \\[2pt] \small (c) Epoch 500
\end{minipage}
\caption{Learned sample relations on a ten-class EYaleB subset at epochs $10$, $100$, and $500$. Class boundaries are overlaid.}
\label{fig:aff_yaleb}
\end{figure*}

\begin{figure*}
\centering
\begin{minipage}[b]{0.32\textwidth}
  \centering
  \includegraphics[width=\linewidth]{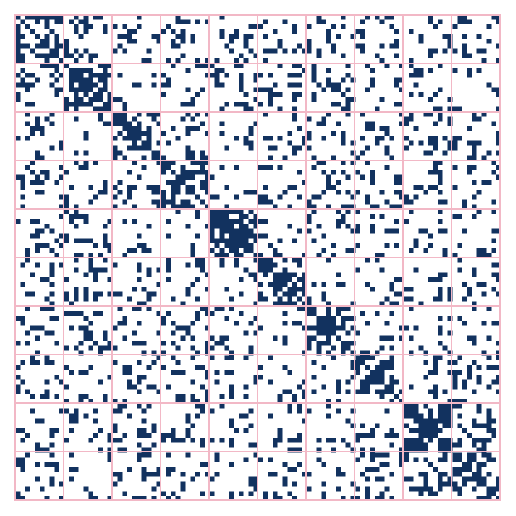}
  \\[2pt] \small (a) Epoch 10
\end{minipage}
\hfill
\begin{minipage}[b]{0.32\textwidth}
  \centering
  \includegraphics[width=\linewidth]{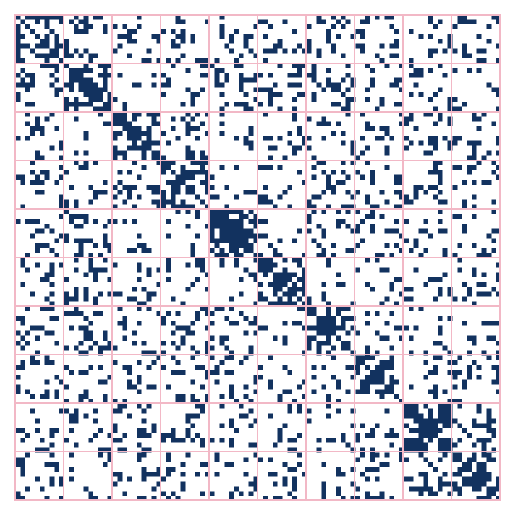}
  \\[2pt] \small (b) Epoch 100
\end{minipage}
\hfill
\begin{minipage}[b]{0.32\textwidth}
  \centering
  \includegraphics[width=\linewidth]{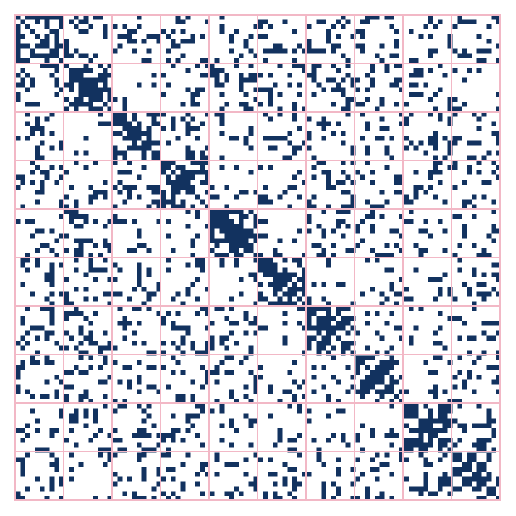}
  \\[2pt] \small (c) Epoch 500
\end{minipage}
\caption{Learned sample relations on a ten-class ORL subset at epochs $10$, $100$, and $500$. Class boundaries are overlaid.}
\label{fig:aff_orl}
\end{figure*}

\begin{figure*}
\centering
\begin{minipage}[b]{0.32\textwidth}
  \centering
  \includegraphics[width=\linewidth]{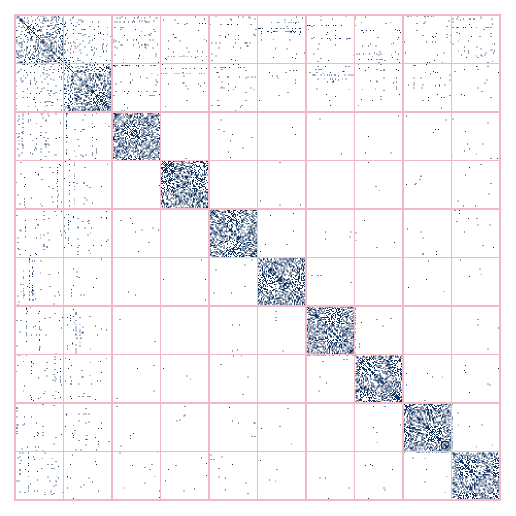}
  \\[2pt] \small (a) Epoch 10
\end{minipage}
\hfill
\begin{minipage}[b]{0.32\textwidth}
  \centering
  \includegraphics[width=\linewidth]{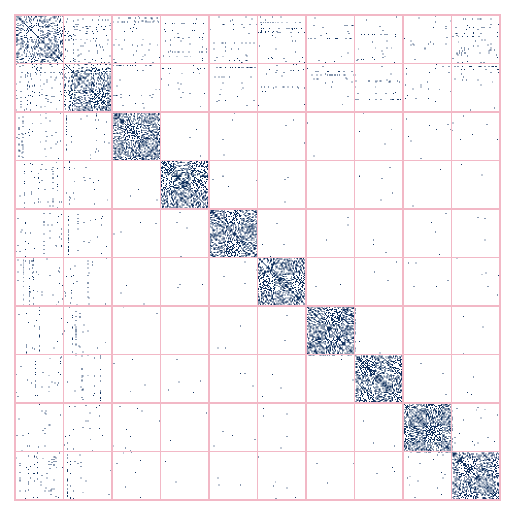}
  \\[2pt] \small (b) Epoch 100
\end{minipage}
\hfill
\begin{minipage}[b]{0.32\textwidth}
  \centering
  \includegraphics[width=\linewidth]{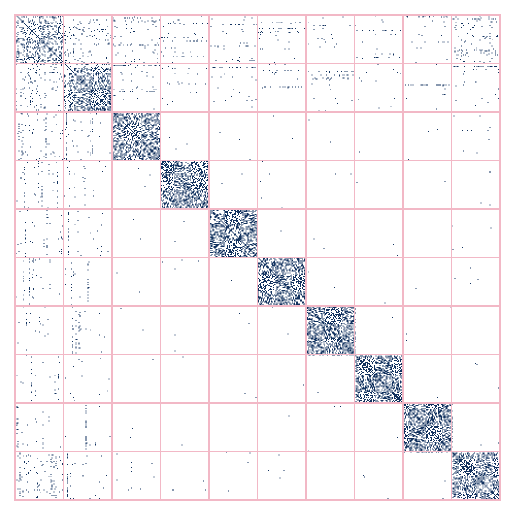}
  \\[2pt] \small (c) Epoch 500
\end{minipage}
\caption{Learned sample relations on synthetic data with ten subspaces at epochs $10$, $100$, and $500$. Subspace boundaries are overlaid.}
\label{fig:aff_syn_epoch}
\end{figure*}

\begin{figure*}
\centering
\begin{minipage}[b]{0.32\textwidth}
  \centering
  \includegraphics[width=\linewidth]{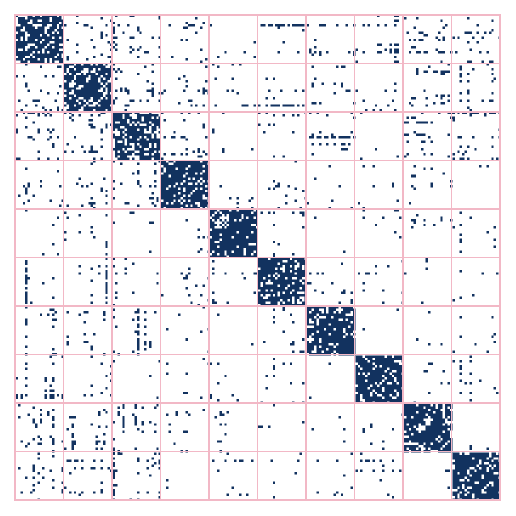}
  \\[2pt] \small (a) $n = 200$
\end{minipage}
\hfill
\begin{minipage}[b]{0.32\textwidth}
  \centering
  \includegraphics[width=\linewidth]{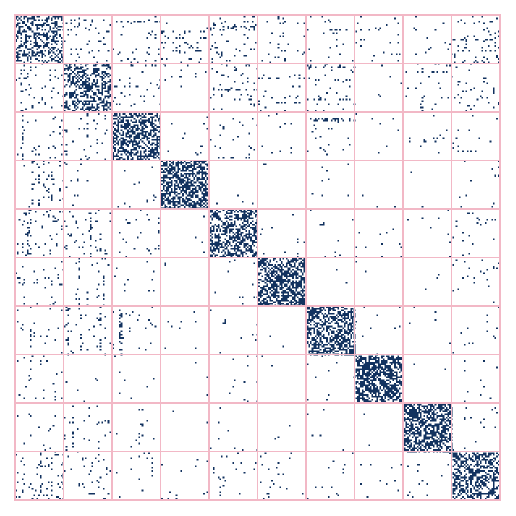}
  \\[2pt] \small (b) $n = 300$
\end{minipage}
\hfill
\begin{minipage}[b]{0.32\textwidth}
  \centering
  \includegraphics[width=\linewidth]{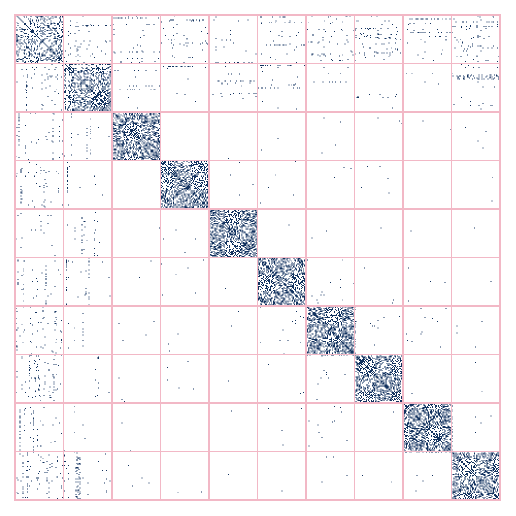}
  \\[2pt] \small (c) $n = 500$
\end{minipage}
\caption{Learned sample relations on synthetic data with ten subspaces and increasing total sample count at epoch $500$. Subspace boundaries are overlaid.}
\label{fig:aff_syn_size}
\end{figure*}

\subsection{Affinity-matrix structure}

Figures~\ref{fig:aff_yaleb}--\ref{fig:aff_syn_epoch} visualize the learned sample relations at epochs $10$, $100$, and $500$ after pretraining. The face-data panels use ten-class subsets, and the synthetic panels contain ten subspaces. Samples are grouped by their classes, with class boundaries overlaid. These plots examine whether the representation retains class-aligned relations throughout optimization.

On EYaleB, the diagonal blocks are already visible after 10 epochs and remain clearly identifiable at later epochs. The within-class regions contain more pronounced structure than most off-block regions. ORL also retains recognizable diagonal blocks despite having only ten images per subject. This visual organization is consistent with the strong face-clustering scores in Tables~\ref{tab:acc}--\ref{tab:ari}. In particular, the displayed relations extend beyond individual self-connections and include associations between distinct samples from the same class.

The synthetic panels exhibit an even clearer separation between dense diagonal blocks and the surrounding regions. This organization persists across the training stages (i.e., $10$, $100$, and $500$ epochs). Figure~\ref{fig:aff_syn_size} compares $n=200$, $300$, and $500$ at epoch $500$. The 10 diagonal blocks remain identifiable as more samples are added, and the larger panels show a clearer visual contrast between within-subspace structure and off-block regions. Together with the quantitative connectivity results, these visualizations support the ability of the affinity matrix to retain structured sample relations as the dataset grows.

\section{Conclusion}

This work presents LoomSC, a scalable subspace clustering framework that combines projector-based self-expression with exact spectral reduction. Jointly learning latent representations and two thin factors avoids explicit sample-by-sample coefficient matrices. A nonnegative quadratic affinity preserves the projector's support and enables normalized spectral clustering without forming the full affinity or sample Laplacian. For fixed dimensions and iteration budgets, the complete pipeline has linear time and memory complexity in the number of samples. Our analysis quantifies the projector approximation to regularized least squares. It also identifies conditions under which the full row-space projector preserves subspace memberships and ensures within-subspace connectivity.

Across five image-clustering benchmarks, LoomSC achieves the highest average ACC, NMI, and ARI among the evaluated methods. Synthetic experiments extend to $500{,}000$ samples while maintaining at least $99.8\%$ accuracy throughout the tested range. These findings demonstrate that compact self-expression can combine strong clustering quality with scalability across both representation learning and spectral assignment.

\bibliographystyle{IEEEtran}
\bibliography{references}

\clearpage
\appendices
\section{Spectral Analysis of the LSR Approximation}
\label{app:loom-lsr}

The normal equation of~\eqref{eq:loom-ridge} is
\begin{equation}
 (X^\top X+2\lambda I_n)C=X^\top X.
 \label{eq:loom-lsr-normal}
\end{equation}
The coefficient matrix is positive definite for $\lambda>0$, so the solution is unique. Since $X^\top X=V_x\operatorname{diag}(\beta_1,\ldots,\beta_r)V_x^\top$, the vectors in $V_x$ diagonalize the solution on the row space of $X$. Along its $j$th eigenvector direction, the normal equation becomes
$(\beta_j+2\lambda)\tau_j=\beta_j$.
In the orthogonal complement, the solution has an eigenvalue of zero. This proves Proposition~\ref{prop:loom-lsr-spectrum}.

The LSR solution and $C_{\mathrm{row}}$ have the same eigenvectors on the row space. The invariance of the Frobenius norm under orthonormal changes of coordinates therefore gives
\begin{equation}
 \|C^\star_{\mathrm{LSR}}-C_{\mathrm{row}}\|_F^2
 =\sum_{j=1}^{r}(1-\tau_j)^2,
 \label{eq:loom-lsr-error-proof}
\end{equation}
which proves~\eqref{eq:loom-ridge-gap}. For the rank-$m$ projector $V_{x,m}V_{x,m}^\top$, the first $m$ eigenvalues are one and the remaining eigenvalues are zero. Splitting the squared difference over these two sets proves~\eqref{eq:loom-truncated-approximation}. These identities quantify the spectral approximation used to motivate the factor model.

\section{Proofs for the Factor Model}
\label{app:loom-factor}

For fixed $Z\in\mathbb R^{d_z\times n}$ and feasible $P\in\mathbb R^{n\times m}$, decompose the residual as
\begin{equation}
 Z-LP^\top
 =Z(I_n-PP^\top)+(ZP-L)P^\top.
 \label{eq:loom-app-split}
\end{equation}
The two terms are orthogonal in the Frobenius inner product because $(I_n-PP^\top)P=0$. Moreover, $\|(ZP-L)P^\top\|_F^2=\|ZP-L\|_F^2$ since $P^\top P=I_m$. Taking squared norms proves~\eqref{eq:loom-split}. The second term in that identity is strictly convex in $L$. Thus, $L=ZP$ is its unique minimizer. The equality $LP^\top=ZPP^\top$ holds after this update.

For fixed $L$, expansion gives~\eqref{eq:loom-procrustes-objective}. Let $Z^\top L=U_P\Sigma_PV_P^\top$ have the dimensions specified in \eqref{eq:loom-p-update}. The trace inequality yields
\begin{equation}
 \operatorname{tr}(P^\top Z^\top L)
 \leq\operatorname{tr}(\Sigma_P).
 \label{eq:loom-app-trace}
\end{equation}
The feasible matrix $P=U_PV_P^\top$ attains equality. This proves the global optimality of the Procrustes update. When $Z^\top L$ is rank deficient, zero singular directions must be included so that both singular bases have $m$ columns. Different orthonormal completions can yield different minimizers. 

To prove Proposition~\ref{prop:loom-factor}, eliminate $L$ and expand the remaining objective:
\begin{equation}
 \|Z-ZPP^\top\|_F^2
 =\|Z\|_F^2-\operatorname{tr}(P^\top Z^\top ZP).
 \label{eq:loom-app-rayleigh}
\end{equation}
The variational characterization of eigenvalues maximizes the trace over an $m$-dimensional dominant eigenspace of $Z^\top Z$. Its eigenvalues are $\sigma_1^2,\ldots,\sigma_{r_z}^2$, followed by zeros. Subtracting their largest $m$ values from $\|Z\|_F^2$ gives \eqref{eq:loom-tail}. When $m<r_z$, a spectral tie at the truncation boundary can make the optimal subspace nonunique.

When $m=r_z$, the minimum residual is zero. Hence, every global minimizer satisfies $ZPP^\top=Z$, which implies $\operatorname{range}(Z^\top)\subseteq\operatorname{range}(P)$.
Both spaces have dimension $r_z$, so they coincide. Consequently, $C=PP^\top=V_zV_z^\top=Z^\dagger Z$ is the unique row-space projector, where $\dagger$ denotes the Moore--Penrose pseudoinverse. Its orthonormal basis $P$ remains nonunique. If $m>r_z$, the same inclusion holds, but $\operatorname{range}(P)$ also contains $m-r_z$ directions orthogonal to the row space. This proves the remaining assertions of Proposition~\ref{prop:loom-factor}.

For Proposition~\ref{prop:loom-descent}, each exact block update minimizes the same fixed-$Z$ objective with its other block held constant. Neither update can increase that objective. Its successive values are nonnegative and nonincreasing, so they converge.

\section{Subspace Preservation and Connectivity}
\label{app:loom-recovery}

Reorder the samples by subspace and write $Z=[Z_1,\ldots,Z_k]$, where $Z_a\in\mathbb R^{d_z\times n_a}$, $\sum_{a=1}^k n_a=n$, and $\operatorname{span}(Z_a)=S_a$. Under Proposition~\ref{prop:loom-recovery}, $S_a$ has dimension $r_a>0$ and the subspaces form a direct sum. Choose a basis matrix $\Omega_a\in\mathbb R^{d_z\times r_a}$ and write $Z_a=\Omega_a\Xi_a$, where $\Xi_a\in\mathbb R^{r_a\times n_a}$ has full row rank. Define
\begin{equation}
 \begin{aligned}
 \Omega&=[\Omega_1,\ldots,\Omega_k]
       \in\mathbb R^{d_z\times r_z},\\
 \Xi&=\operatorname{blkdiag}(\Xi_1,\ldots,\Xi_k)
       \in\mathbb R^{r_z\times n}.
 \end{aligned}
 \label{eq:loom-app-coordinates}
\end{equation}
The direct-sum condition makes $\Omega$ full column rank. Therefore, $Z=\Omega\Xi$ and $\Xi=\Omega^\dagger Z$, which imply $\operatorname{row}(Z)=\operatorname{row}(\Xi)$. Their orthogonal row-space projectors coincide:
\begin{equation}
 \begin{aligned}
 Z^\dagger Z
 &=\Xi^\dagger\Xi\\
 &=\operatorname{blkdiag}
   (Z_1^\dagger Z_1,\ldots,Z_k^\dagger Z_k).
 \end{aligned}
 \label{eq:loom-app-projector-blocks}
\end{equation}
The second equality uses the block structure of $\Xi$ and $\operatorname{row}(\Xi_a)=\operatorname{row}(Z_a)$. At a global fixed-$Z$ optimum with $m=r_z$, Appendix~\ref{app:loom-factor} gives $C=Z^\dagger Z$. Equation~\eqref{eq:loom-app-projector-blocks} therefore proves subspace preservation.

For connectivity, the stated sample condition can be written
\begin{equation}
 \operatorname{rank}(Z_{\mathcal A})
 +\operatorname{rank}(Z_{\mathcal B})>r_a
 \label{eq:loom-app-indecomposable}
\end{equation}
for every partition of a class's indices into two nonempty sets $\mathcal A$ and $\mathcal B$. Here $Z_{\mathcal A}$ and $Z_{\mathcal B}$ contain the corresponding columns of $Z_a$. Indeed, their spans sum to $S_a$. Equality of their rank sum with $r_a$ is equivalent to those spans forming a direct sum.

Set $C_a=Z_a^\dagger Z_a$. Suppose the graph defined by its nonzero off-diagonal entries is disconnected. There is a nontrivial partition $\mathcal A,\mathcal B$ for which $C_a$ is block diagonal. Since $\operatorname{row}(C_a)=\operatorname{row}(Z_a)$, this row space then splits into two spaces supported on the respective coordinate sets. Coordinate projection onto $\mathcal A$ has dimension $\operatorname{rank}(Z_{\mathcal A})$, which also equals the rank of the corresponding diagonal block of $C_a$. The analogous equality holds for $\mathcal B$. Adding the two block ranks gives $\operatorname{rank}(Z_{\mathcal A})+ \operatorname{rank}(Z_{\mathcal B})=r_a$, contradicting~\eqref{eq:loom-app-indecomposable}. Conversely, equality in this rank sum gives a direct-sum partition. Applying~\eqref{eq:loom-app-projector-blocks} to its two groups makes $C_a$ block diagonal. Thus, the sample condition is also necessary
for connectivity of the full row-space projector graph. A singleton containing a nonzero sample has a connected one-vertex graph and satisfies the condition vacuously.

Every diagonal entry of $C_a$ is positive when all samples are nonzero. A zero diagonal in an orthogonal projector forces the corresponding row and column to vanish. The identity $Z_a=Z_aC_a$ would then make the corresponding sample zero, a contradiction. Positive row rescaling and entrywise squaring preserve the off-diagonal support. The affinity specified in the method therefore has exactly $k$ connected components under these assumptions.

The eigenspace of eigenvalue one of its normalized affinity is spanned by the degree-weighted component indicators. Any orthonormal basis of this eigenspace differs from the normalized indicator basis by an orthogonal rotation. Row normalization maps every sample of one component to the same direction. Distinct components yield distinct directions. Consequently, the ideal embedding admits zero-distortion $k$-means assignments equal to the components, up to a label permutation. 

\section{Properties of the Quadratic Affinity and Spectral Reduction}
\label{app:loom-spectral-proofs}

\paragraph{Affinity and degrees}
The definition of $\operatorname{svec}$ gives $\operatorname{svec}(R)^\top\operatorname{svec}(S) =\operatorname{tr}(R^\top S)$ for symmetric $R,S\in\mathbb{R}^{m\times m}$. Applying this identity to the rank-one matrices $ p _i p _i^\top$ and $ p _j p _j^\top$ proves \eqref{eq:loom-affinity-factor}. Hence $A$ is positive semidefinite. Its entrywise nonnegativity follows from \eqref{eq:loom-affinity}. Let $e_i\in\mathbb R^n$ be the $i$th standard basis vector. Since $C^2=C$,
\begin{equation}
 \begin{aligned}
 \delta_i
 &= e _i^\top CC^\top e _i
   = e _i^\top C e _i
   =\| p _i\|_2^2,\\
 \sum_i\delta_i&=\operatorname{tr}(C)=m.
 \end{aligned}
 \label{eq:loom-degree-proof}
\end{equation}
The diagonal entries of an orthogonal projector belong to $[0,1]$, so $0\leq\delta_i\leq1$. Moreover, $\delta_i=0$ if and only if $ p _i=0$, which makes the entire corresponding affinity row zero. Removing these rows, therefore, leaves the degrees of all active samples unchanged.

\paragraph{Proof of Proposition~\ref{prop:loom-spectral}}
Let $s=\operatorname{rank}(F_+)$. Choose $s$ rows of $F_+$ spanning its row space. The corresponding $s$ rank-one matrices span all matrices $ p _i p _i^\top$. In particular, their span contains
\begin{equation}
 \sum_{i\in\mathcal I_+} p _i p _i^\top
 =P^\top P=I_m.
 \label{eq:loom-lift-rank-proof}
\end{equation}
A linear combination of $s$ rank-one matrices has rank at most $s$. Thus, $m\leq s$. The upper bound follows from the dimensions of $F_+$. Since $\Delta$ is positive definite, $B$ and $F_+$ have the same rank.

If $G v =\nu v $ for $\nu>0$, then
$(BB^\top)(B v )=B(G v )=\nu B v $
and $\|B v \|_2^2=\nu\| v \|_2^2$.
Conversely, a positive eigenpair of $BB^\top$ maps to an eigenpair of $G$ by multiplication by $B^\top$. For orthonormal eigenvectors,
\begin{equation}
 \frac{ v _a^\top B^\top B v _b}
      {\sqrt{\nu_a\nu_b}}
 = 1 \{a=b\}.
 \label{eq:loom-lifting-proof}
\end{equation}
This proves the correspondence, multiplicities, and orthonormality.

\paragraph{Leading vector and spectral convention}
Let $ \omega =\operatorname{svec}(I_m)$. We have
$F_+^\top 1 _{n_+}= \omega $,
$F_+ \omega =(\delta_i)_{i\in\mathcal I_+}$, and
$\| \omega \|_2^2=m$. Therefore,
\begin{equation}
 G \omega 
 =F_+^\top\Delta^{-1}F_+ \omega 
 =F_+^\top 1 _{n_+}
 = \omega .
 \label{eq:loom-leading-proof}
\end{equation}
This proves~\eqref{eq:loom-leading-vector}. Let $A_{++}\in\mathbb R^{n_+\times n_+}$ be the principal submatrix of $A$ indexed by $\mathcal I_+$. The normalized affinity $BB^\top$ is similar to the row-stochastic matrix $\Delta^{-1}A_{++}$. Its eigenvalues have a magnitude of at most one. Positive semidefiniteness further restricts them to $[0,1]$. Thus, the retained eigenvalue is dominant. Locking this vector remains valid when the eigenvalue one is repeated. Every active row then contains the strictly positive coordinate $\sqrt{\delta_i/m}$, which justifies the row normalization in~\eqref{eq:loom-spectral-embedding}.

\begin{proposition}[Conditional recovery on the active graph]
\label{prop:loom-component-recovery}
Suppose $A_{++}$ has exactly $k$ connected components. The row-normalized embedding in~\eqref{eq:loom-spectral-embedding}, computed from exact eigenvectors, has one distinct row per component. An exact solution of $k$-means therefore recovers those components up to label permutation.
\end{proposition}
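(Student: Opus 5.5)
Order the active indices by component, so that $A_{++}=\operatorname{blkdiag}(A_1,\ldots,A_k)$ with each $A_a$ irreducible, and $\Delta=\operatorname{blkdiag}(\Delta_1,\ldots,\Delta_k)$ with every $\Delta_a$ positive definite. The normalized affinity $BB^\top=\Delta^{-1/2}A_{++}\Delta^{-1/2}$ is then block diagonal as well. The appendix shows it is similar to the row-stochastic matrix $\Delta^{-1}A_{++}$ and has spectrum in $[0,1]$. Each block $\Delta_a^{-1}A_a$ is a nonnegative, irreducible, row-stochastic matrix, so Perron--Frobenius gives eigenvalue one with geometric multiplicity exactly one, and its right eigenvector is the all-ones vector. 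Transferring back through the similarity, the eigenvalue-one eigenspace of $BB^\top$ is exactly $k$-dimensional. It is spanned by the orthonormal vectors $u_a=\Delta^{1/2}\mathbf 1_{\mathcal C_a}/\|\Delta^{1/2}\mathbf 1_{\mathcal C_a}\|_2$, where $\mathcal C_a$ is the index set of component $a$.

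Next, the $k$ retained modes of~\eqref{eq:loom-spectral-embedding} must be exactly an orthonormal basis of this eigenspace. By the leading-vector computation, $Bv_1$ lies in it with $\nu_1=1$. The remaining $k-1$ modes are the largest eigenpairs of $G$ on $v_1^\perp$. By Proposition~\ref{prop:loom-spectral}, the positive spectra of $G$ and $BB^\top$ coincide with multiplicities, and the lifted vectors $Bv_j/\sqrt{\nu_j}$ are orthonormal. Eigenvalue one therefore has multiplicity exactly $k$ for $G$ as well, so its top $k-1$ eigenvalues on $v_1^\perp$ all equal one. All other eigenvalues are strictly below one, so no choice of exact eigenvectors can mix in other modes.

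Consequently $H=U_kQ$ for some orthogonal $Q\in\mathbb R^{k\times k}$, where $U_k=[u_1,\ldots,u_k]$. Row $i$ of $U_k$, for $i\in\mathcal C_a$, equals $\sqrt{\delta_i}\,e_a^\top/\|\Delta_a^{1/2}\mathbf 1\|_2$. Every $\delta_i$ is positive on the active set, so row normalization sends each sample of component $a$ to $e_a^\top Q$. These $k$ points are orthonormal, hence pairwise distinct, with mutual distance $\sqrt2$.

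For the $k$-means claim, the partition into components with centers $e_a^\top Q$ has zero distortion, so it is a global minimizer. Conversely, any zero-distortion solution must place each of the $k$ distinct points at a center, which forces a bijection between clusters and components. Exact $k$-means therefore recovers the components up to label permutation. I expect the main obstacle to be the multiplicity-and-locking step. It requires arguing that restricting to $v_1^\perp$, combined with the $G$--$BB^\top$ correspondence, still selects the full eigenvalue-one eigenspace and no vectors from strictly smaller eigenvalues. I would handle it by noting that $v_1$ itself lies in the eigenvalue-one eigenspace of $G$, so that eigenspace intersected with $v_1^\perp$ has dimension exactly $k-1$.
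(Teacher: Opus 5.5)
Your proof is correct and takes essentially the same route as the paper. In both, the eigenvalue-one eigenspace of $BB^\top$ is spanned by the degree-weighted component indicators, the retained modes are an orthogonal rotation of that basis, and row normalization collapses each component onto one row of the rotation, so the $k$ resulting orthonormal points give a zero-distortion $k$-means partition matching the components. You additionally justify steps the paper takes for granted: the exact multiplicity $k$ via Perron--Frobenius on the irreducible row-stochastic blocks, the fact that locking $v_1$ and taking the top $k-1$ modes on $v_1^\perp$ still selects the whole eigenvalue-one eigenspace, and the converse that every zero-distortion $k$-means solution matches the components.
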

\begin{proof}
Let $\Upsilon\in\{0,1\}^{n_+\times k}$ be the component membership matrix. The eigenvalue-one eigenspace of $BB^\top$ has dimension $k$ and an orthonormal basis
\begin{equation}
 \Delta^{1/2}\Upsilon(\Upsilon^\top\Delta\Upsilon)^{-1/2}O,
 \qquad O^\top O=I_k.
 \label{eq:loom-component-basis}
\end{equation}
The diagonal matrix $\Upsilon^\top\Delta\Upsilon$ contains the positive component volumes. Within component $a$, each row of this basis is a positive scalar multiple of the $a$th row of $O$. Row normalization removes that scalar. The $k$ resulting rows are distinct because the rows of $O$ are orthonormal. Clustering these rows into $k$ nonempty groups has minimum within-cluster sum of squares zero, attained exactly by the component partition up to permutation.
\end{proof}

\paragraph{Implicit reduced operator}
Using~\eqref{eq:loom-lift} and the Frobenius inner-product identity,
for each original index $i\in\mathcal I_+$, the corresponding row of $F_+v$ equals $p_i^\top\operatorname{smat}(v)p_i$.
Multiplying by $F_+^\top\Delta^{-1}$ yields~\eqref{eq:loom-gram-action}. This proves the implicit operator identity.

\section{Implementation Conventions}
\label{app:loom-implementation}

The implementation specifies the network architecture, preprocessing, initialization, random seeds, Adam settings, and the iteration budgets in Algorithm~\ref{alg:loom-main}. Each training update follows the stated order. When gradients are accumulated over sample chunks, their contributions in~\eqref{eq:loom-batches} are summed before the parameter update. Joint training starts with zero Adam moments.

\paragraph{Sample-based initialization}
For $k$-means++ seeding of $L$, select the first index uniformly from $\{1,\ldots,n\}$. Let $\mathcal S$ be the indices already selected. For each remaining index, compute its squared distance to the closest selected column,
\begin{equation}
 d_i^{\mathrm{seed}}=\min_{j\in\mathcal S}\|z_i-z_j\|_2^2,
 \qquad
 \Pr(i\mid\mathcal S)
 =\frac{d_i^{\mathrm{seed}}}{\sum_{j\notin\mathcal S}d_j^{\mathrm{seed}}}.
 \label{eq:loom-seeding}
\end{equation}
Use this distribution until $m$ distinct indices have been selected. When its denominator is zero, sample uniformly from the remaining indices. Put the selected columns into $L$ in selection order. This is the squared-distance seeding rule~\cite{loom_kmeanspp}. 

\paragraph{Adam update}
For completeness, let $w_t$ be the vector containing the current network parameters and let $g_t$ be its gradient of the specified objective. These are vectorizations used only to define the optimizer. Let $a_t$ and $b_t$ be its first- and second-moment vectors. With $a_0=b_0=0$, coefficient inputs $0\le\rho_1,\rho_2<1$, step size $\eta_t>0$, and $\epsilon_{\mathrm{Adam}}>0$, one update is
\begin{equation}
 \begin{aligned}
 a_t&=\rho_1a_{t-1}+(1-\rho_1)g_t,\\
 b_t&=\rho_2b_{t-1}+(1-\rho_2)(g_t\odot g_t),\\
 \widehat a_t&=a_t/(1-\rho_1^t),\qquad
 \widehat b_t=b_t/(1-\rho_2^t),\\
 w_{t+1}&=w_t-\eta_t\,
 \widehat a_t\oslash(\sqrt{\widehat b_t}+\epsilon_{\mathrm{Adam}}).
 \end{aligned}
 \label{eq:loom-adam}
\end{equation}
Products $\odot$, divisions $\oslash$, and the square root are elementwise. These equations specify Adam~\cite{loom_adam}, without an added weight-decay penalty. For pretraining, $g_t$ differentiates~\eqref{eq:loom-pretrain}; for joint training, it differentiates~\eqref{eq:loom-joint} with fixed factors. The moment vectors have the same dimension as $w_t$.

\paragraph{Factor accuracy}
The thin decomposition in~\eqref{eq:loom-p-update} uses $m$ orthonormal columns, completing the singular bases when zero singular values occur. After each factor update, we check the residual $\|P^\top P-I_m\|_F$ at the chosen working precision.

\paragraph{Spectral accuracy and reproducibility}
The eigensolver retains the analytical mode in~\eqref{eq:loom-leading-vector} and computes the remaining dominant modes on its orthogonal complement. For prescribed residual, orthogonality, and positive-eigenvalue tolerances
$\varepsilon_{\mathrm{eig}}>0$, $\epsilon_{\mathrm{orth}}>0$, and $\varepsilon_{\mathrm{rank}}>0$, respectively, we accept computed eigenpairs satisfying
\begin{equation}
 \begin{aligned}
 \max_{1\leq j\leq k}\|Gv_j-\nu_jv_j\|_2
   &\leq\varepsilon_{\mathrm{eig}},\\
 \max_{1\leq a,b\leq k}
 |v_a^\top v_b-\mathbf1\{a=b\}|
   &\leq\epsilon_{\mathrm{orth}},\\
 \nu_k&>\varepsilon_{\mathrm{rank}}.
 \end{aligned}
 \label{eq:loom-spectral-certificate}
\end{equation}
The iteration budget or working precision is increased as needed to meet these criteria. Eigenvector ordering and random seeds are recorded for reproducibility. When an eigenvalue is repeated at the truncation boundary,
the solver fixes a reproducible orthonormal basis of the selected modes. The active set contains every strictly positive degree in \eqref{eq:loom-degree}. Rows with zero degree are assigned by \eqref{eq:loom-isolate-assignment}.

\paragraph{Final clustering}
Let $\iota(i)$ denote the row of $F_+$ corresponding to the original sample index $i\in\mathcal I_+$. The same ordering is used in $H$ and $\widehat H$. With spectral centers $\mu_a\in\mathbb R^k$, the clustering objective and Lloyd updates are
\begin{equation}
 \begin{aligned}
 \min_{\{\gamma_i\},\{\mu_a\}}\quad
 &\sum_{i\in\mathcal I_+}
 \|\widehat H_{\iota(i):}^{\top}-\mu_{\gamma_i}\|_2^2,\\
 \gamma_i&\leftarrow\arg\min_{1\le a\le k}
 \|\widehat H_{\iota(i):}^{\top}-\mu_a\|_2^2,\\
 \mu_a&\leftarrow\frac1{|\mathcal J_a|}
 \sum_{i\in\mathcal J_a}\widehat H_{\iota(i):}^{\top}.
 \end{aligned}
 \label{eq:loom-kmeans}
\end{equation}
Use $R_{\mathrm{km}}\ge1$ recorded $k$-means++ restarts on these rows. Apply the seeding rule~\eqref{eq:loom-seeding} to the spectral rows, selecting $k$ seeds among the $n_+$ active samples. Each restart takes at most $T_{\mathrm{km}}$ Lloyd steps. Ties between centers use the smallest center index. Repair an empty cluster before its mean update by moving the sample with the largest current squared residual from a cluster containing at least two samples, with ties resolved by the original sample index. Then recompute affected centers. Stop early when assignments do not change. Keep the restart with minimum within-cluster sum of squares. This procedure returns $k$ nonempty groups for $n_+\ge k$, as ensured by~\eqref{eq:loom-lift-rank}.

\section{Complexity Comparison with Baselines}
\label{app:complexity-comparison}

Table~\ref{tab:complexity} separates representation learning from spectral computation. The bounds count sequential arithmetic operations. The numbers of training passes and optimization iterations are held fixed to isolate sample-size scaling. Eigensolver budgets are retained explicitly.

To compare dense, sparse, and implicit spectral operators on the same basis, define
\begin{equation}
\mathcal C_{\mathrm{sp}}(d_{\mathrm{op}},\omega)
={}T_{\mathrm{eig}}(\omega+d_{\mathrm{op}}s_{\mathrm{eig}})+R_{\mathrm{eig}}(d_{\mathrm{op}}s_{\mathrm{eig}}^2+s_{\mathrm{eig}}^3).
\label{eq:comparison-spectral-cost}
\end{equation}
Here, $d_{\mathrm{op}}$ is the operator dimension and $\omega$ bounds the cost of one matrix--vector product. The solver-specific budgets $T_{\mathrm{eig}}$, $s_{\mathrm{eig}}$, and $R_{\mathrm{eig}}$ have the meanings given in Section~\ref{sec:loom-complexity}. For an explicitly stored sparse graph, $\omega$ is its number of nonzero entries. A dense graph instead has $\omega=O(n^2)$. Thus, partial eigensolvers retain the benefit of sparse affinities. The common final $k$-means cost, $O(R_{\mathrm{km}}(T_{\mathrm{km}}+1)nk^2)$, is added once to the sum of the two time columns for the complete pipeline.

\begin{table*}[!t]
\centering
\small
\setlength{\tabcolsep}{4pt}
\renewcommand{\arraystretch}{1.25}
\caption{Time and representation-storage complexity of the compared methods. All entries are inside $O(\cdot)$. The two time columns are additive, with the common final $k$-means cost added once. Fixed optimization budgets are suppressed. Spectral budgets are included through~\eqref{eq:comparison-spectral-cost}. The last column counts the principal coefficient, factor, and affinity matrices, not total working memory. Bounds are obtained from the original algorithms under the implementation conventions stated below.}
\label{tab:complexity}
\begin{tabular}{@{}lc>{\raggedright\arraybackslash}p{2.30in}>{\raggedright\arraybackslash}p{1.98in}>{\raggedright\arraybackslash}p{1.07in}@{}}
\toprule
Method & Deep & Representation / affinity learning & Spectral computation & Matrix storage \\
\midrule
LMVSC~\cite{paper15} & No
& $nd_xm+nm^3$
& $nm^2+m^3$
& $nm$ \\
SGL~\cite{paper36} & No
& $nd_xm+nm^3$
& $nm^2+m^3$\textsuperscript{a}
& $nm$ \\
S$^5$C~\cite{loom_s5c} & No
& $n(d_xm+c_{\ell_1})$
& $\mathcal C_{\mathrm{sp}}(n,nm)$
& $nm$ \\
EnSC~\cite{paper19} & No
& $n^2d_x+n c_{\mathrm{EN}}$
& $\mathcal C_{\mathrm{sp}}(n,ns)$
& $ns$ \\
SSC-OMP~\cite{paper17} & No
& $n^2d_xs$
& $\mathcal C_{\mathrm{sp}}(n,ns)$
& $ns$ \\
A-DSSC~\cite{paper22} & No
& $\mathcal C_C+\mathcal C_{\mathrm{chk}}+e_{\mathrm{act}}$
& $\mathcal C_{\mathrm{sp}}(n,e_{\mathrm{act}})$
& $\mathcal M_C+e_{\mathrm{act}}$ \\
SSSC~\cite{paper18} & No
& $R_{\mathrm{drop}}\rho n^2d_xs$
& $\mathcal C_{\mathrm{sp}}(n,e_{\mathrm{drop}})$
& $e_{\mathrm{drop}}$ \\
LSR~\cite{paper9} & No
& $n^2d_x+\min\{n^3,nd_x^2+d_x^3\}$\textsuperscript{b}
& $\mathcal C_{\mathrm{sp}}(n,n^2)$
& $n^2$ \\
\midrule
PRO-DSC~\cite{loom_pro_dsc} & Yes
& $\begin{aligned}[t]
&(n/n_b)c_{\mathrm{PRO}}(n_b)\\[-2pt]
&\quad+nc_{\mathrm{net}}+n^2d_z
\end{aligned}$
& $\mathcal C_{\mathrm{sp}}(n,n^2)$
& $n^2$ \\
LoomSC & Yes
& $nc_{\mathrm{net}}+nd_zm+nm^2+m^3$
& $\begin{aligned}[t]
&nm+\mathcal C_{\mathrm{sp}}(q,nq)\\[-2pt]
&\quad+nqk
\end{aligned}$
& $nm+d_zm$ \\
\bottomrule
\end{tabular}
\par\vspace{3pt}
\begin{minipage}{\textwidth}
\footnotesize
For LoomSC, $q=m(m+1)/2$. The table reports spectral computation using the operator in~\eqref{eq:loom-gram-action}, without assembling $G$. $s$ bounds the nonzeros per sample for EnSC and SSC-OMP, and per dropout subproblem for SSSC. For SSSC, $\rho$ is the expected retained dictionary fraction and $R_{\mathrm{drop}}$ is the number of dropout dictionaries, giving $e_{\mathrm{drop}}=\min\{n^2,nsR_{\mathrm{drop}}\}$. The SSSC row gives the expected sequential computational cost at a fixed consensus budget. $c_{\ell_1}$ and $c_{\mathrm{EN}}$ are the costs of one restricted Lasso and elastic-net solve, respectively. The former uses at most $m$ representatives; the latter depends on the active-set size. For A-DSSC, $\mathcal C_C$ and $\mathcal M_C$ denote the chosen coefficient solver's computational cost and matrix storage. $\mathcal C_{\mathrm{chk}}$ includes support initialization and global checks, and $e_{\mathrm{act}}$ bounds the entries processed by its support-restricted transport solver. $n_b$ is PRO-DSC's training batch size, and $c_{\mathrm{PRO}}$ is defined in~\eqref{eq:comparison-pro-batch}. Anchor-method spectral costs use a direct thin SVD of the $n\times m$ factor. \textsuperscript{a}SGL updates this representation within its graph-learning loop. \textsuperscript{b}LSR uses the cheaper of the sample-space and feature-space linear solves. Both routes explicitly form the coefficient matrix.
\end{minipage}
\end{table*}

PRO-DSC~\cite{loom_pro_dsc} trains on mini-batches and constructs its final affinity over the complete clustering set. Combining its regularizer costs with neural computation and matrix products gives the following per-batch computational-cost bound:
\begin{equation}
\begin{aligned}
c_{\mathrm{PRO}}(n_b)
={}&n_b c_{\mathrm{net}}+n_b^2d_z+\min\{n_b^3,d_z^3\}\\
&+\mathcal C_{\mathrm{sp}}(n_b,n_b^2).
\end{aligned}
\label{eq:comparison-pro-batch}
\end{equation}
For $1\le n_b\le n$, the $n/n_b$ multiplier counts batches in a full training pass. A fixed number of Sinkhorn iterations is absorbed into the matrix-product term. The separate $n^2d_z$ term accounts for constructing the full-set affinity at inference. The per-sample network cost $c_{\mathrm{net}}$ is evaluated for each method's own architecture.

The regression-based methods differ in how they reduce optimization cost. S$^5$C~\cite{loom_s5c} uses a selection budget of $m$ and solves the final restricted regression for every sample. Its bound includes both selection and regression costs. For EnSC~\cite{paper19}, the active-set solver is left explicit because the original algorithm permits different restricted solvers. A-DSSC~\cite{paper22} likewise permits different coefficient estimators. Each support-restricted transport evaluation is linear in $e_{\mathrm{act}}$. A complete support check additionally costs $O(n^2)$ with stored coefficients, or $O(d_xn^2)$ when feature-space LSR entries are evaluated on demand. For LSR~\cite{paper22} with $d_x<n$, a feature-space solve reduces coefficient construction to $O(nd_x^2+d_x^3+n^2d_x)$. Explicit assembly and storage of the full coefficient matrix remain quadratic in $n$.

The matrix-storage column isolates the representation responsible for affinity scaling. Total working memory also includes the input data, spectral embedding, eigensolver workspace, and regression-solver workspace. Deep methods additionally retain latent features and network state. For LoomSC, these terms are specified in~\eqref{eq:loom-base-memory}, with the corresponding eigensolver workspace. Quadratic feature rows can be generated as needed, so an $n\times q$ feature matrix need not be stored.

The comparison identifies the source of LoomSC's scalability advantage. LMVSC~\cite{paper15}, SGL~\cite{paper36}, and S$^5$C~\cite{loom_s5c} already have linear sample-size dependence for fixed representative and solver budgets. LoomSC retains this dependence while jointly learning nonlinear features and the two self-expression factors. EnSC~\cite{paper19} and SSC-OMP~\cite{paper17} can reduce coefficient storage through sparsity, but their dictionary scans still compare each target with the full dataset. SSSC reduces the dictionary size by dropout, giving the explicit $R_{\mathrm{drop}}\rho n^2$ dependence in the table~\cite{paper18}. PRO-DSC~\cite{loom_pro_dsc} instead improves the training stage through mini-batches, while its full-affinity spectral inference retains quadratic storage. LoomSC avoids explicitly constructing either $n\times n$ matrix. Its $q$-dimensional reduction is exact for the defined squared affinity. For fixed $m$ and solver budgets, both its learned factors and its reduced spectral computation scale linearly with $n$. Thus, LoomSC combines deep representation learning with scalability through the final clustering stage.

\end{document}